\newenvironment{proofsketch}{%
  \proof}{\endproof}
\newcommand{\RNum}[1]{\uppercase\expandafter{\romannumeral #1\relax}}
\newcommand{\mb}[1]{\mathbf{#1}}
\newcommand{\mbb}[1]{\mathbb{#1}}
\newcommand{\mc}[1]{\mathcal{#1}}
\newcommand{\V}{\mathcal{V}}
\newcommand{\red}{\textcolor{red}}
\newcommand{\xhdr}[1]{{\noindent\bfseries #1}.}
\newcommand{\cut}[1]{}
\newcommand{\removelatexerror}{\let\@latex@error\@gobble}
\def\eqref#1{equation~\ref{#1}}
\def\1{\bm{1}}
\DeclareMathAlphabet{\mathsfit}{\encodingdefault}{\sfdefault}{m}{sl}
\SetMathAlphabet{\mathsfit}{bold}{\encodingdefault}{\sfdefault}{bx}{n}
\DeclareMathOperator{\arccosh}{arccosh}
\newtheorem{lemma}{Lemma}
\newtheorem{prop}{Proposition}
\DeclareRobustCommand\onedot{\futurelet\@let@token\@onedot}
\def\@onedot{\ifx\@let@token.\else.\null\fi\xspace}
\def\eg{\emph{e.g}\onedot} 
\def\ie{\emph{i.e}\onedot}
\begin{document}
\theoremstyle{definition}
\twocolumn[
\icmltitle{Latent Variable Modelling with Hyperbolic Normalizing Flows}



\icmlsetsymbol{equal}{*}

\begin{icmlauthorlist}
\icmlauthor{Avishek Joey Bose}{mcgill,mila}
\icmlauthor{Ariella Smofsky}{mcgill,mila}
\icmlauthor{Renjie Liao}{uoft,vector}
\icmlauthor{Prakash Panangaden}{mcgill,mila}
\icmlauthor{William L. Hamilton}{mcgill,mila}
\end{icmlauthorlist}

\icmlaffiliation{mcgill}{McGill University}
\icmlaffiliation{mila}{Mila}
\icmlaffiliation{uoft}{University of Toronto}
\icmlaffiliation{vector}{Vector Institute}

\icmlcorrespondingauthor{Joey Bose}{joey.bose@mail.mcgill.ca}
\icmlcorrespondingauthor{Ariella Smofsky}{ariella.smofsky@mail.mcgill.ca}
\icmlkeywords{Machine Learning, ICML}

\vskip 0.3in
]



\printAffiliationsAndNotice{}  

\begin{abstract}
The choice of approximate posterior distributions plays a central role in stochastic variational inference (SVI). One effective solution is the use of normalizing flows to construct flexible posterior distributions. 
However, a key limitation of existing normalizing flows is that they are restricted to Euclidean space and are ill-equipped to model data with an underlying hierarchical structure.
To address this fundamental limitation, we present the first extension of normalizing flows to hyperbolic spaces. 
We first elevate normalizing flows to hyperbolic spaces using coupling transforms defined on the tangent bundle, termed Tangent Coupling ($\mathcal{TC}$). 
We further introduce Wrapped Hyperboloid Coupling ($\mathcal{W}\mathbb{H}C$), a fully invertible and learnable transformation that explicitly utilizes the geometric structure of hyperbolic spaces, allowing for expressive posteriors while being efficient to sample from. We demonstrate the efficacy of our novel normalizing flow over hyperbolic VAEs and Euclidean normalizing flows. 
Our approach achieves improved performance on density estimation, as well as reconstruction of real-world graph data, which exhibit a hierarchical structure. 
Finally, we show that our approach can be used to power a generative model over hierarchical data using hyperbolic latent variables. 
\end{abstract}

\section{Introduction}
Stochastic variational inference (SVI) methods provide an appealing way of scaling probabilistic modeling to large scale data.
These methods transform the problem of computing an intractable posterior distribution to finding the best approximation within a class of tractable probability distributions \cite{hoffman2013stochastic}.
Using tractable classes of approximate distributions, \eg, mean-field, and Bethe approximations, facilitates efficient inference, at the cost of limiting the expressiveness of the learned posterior. 

In recent years, the power of these SVI methods has been further improved by employing {\em normalizing flows}, which greatly increase the flexibility of the approximate posterior distribution. 
Normalizing flows involve learning a series of invertible transformations, which are used to transform a sample from a simple base distribution to a sample from a richer distribution \cite{rezende2015variational}. 
Indeed, flow-based posteriors enjoy many advantages such as efficient sampling, exact likelihood estimation, and low-variance gradient estimates when the base distribution is reparametrizable, making them ideal for modern machine learning problems.
There have been numerous advances in normalizing flow construction in Euclidean spaces, such as RealNVP \cite{dinh2016density}, B-NAF \cite{huang2018neural,de2019block}, and FFJORD \cite{grathwohl2018ffjord}, to name a few.

\begin{figure}
    \centering
    \vspace{-10pt}
    \includegraphics[width=0.9\linewidth]{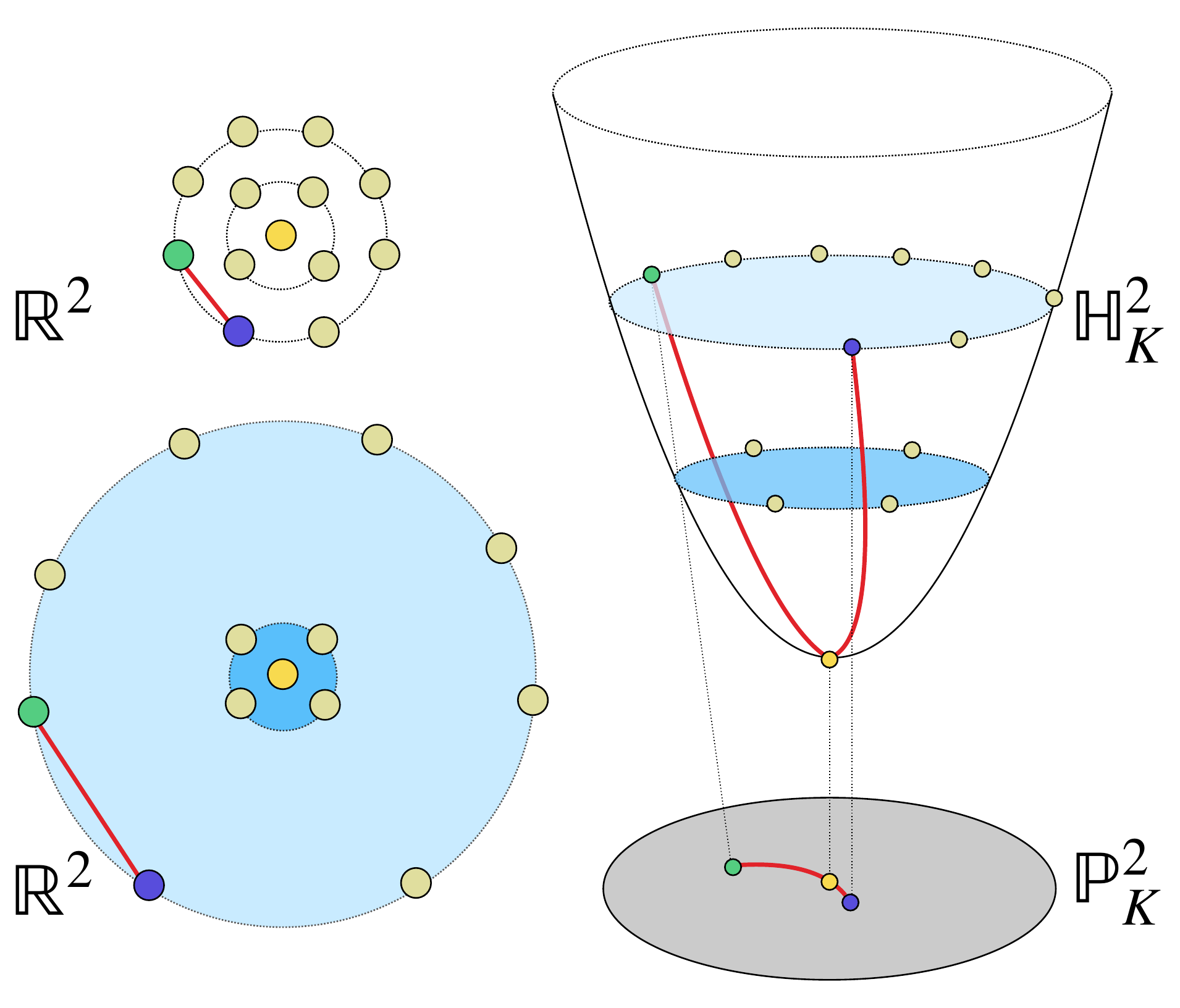}
    \vspace{-10pt}
    \caption{The shortest path between a given pair of node embeddings in $\mathbb{R}^2$ and hyperbolic space as modelled by the Lorentz model $\mathbb{H}^2_K$ and Poincar\'e disk $\mathbb{P}^2_K$. Unlike Euclidean space, distances between points grow exponentially as you move away from the origin in hyperbolic space, and thus the shortest paths between points in hyperbolic space go through a common parent node (i.e., the origin), giving rise to hierarchical and tree-like structure. }
    \vspace{-10pt}
    \label{fig:explanatory_fig_1}
\end{figure}

However, current normalizing flows are restricted to Euclidean space, and as a result, these approaches are ill-equipped to model data with an underlying hierarchical structure. 
Many real-world datasets---such as ontologies, social networks, sentences in natural language, and evolutionary relationships between biological entities in phylogenetics---exhibit rich hierarchical or tree-like structure.
Hierarchical data of this kind can be naturally represented in hyperbolic spaces, \ie, non-Euclidean spaces with constant negative curvature (Figure \ref{fig:explanatory_fig_1}). 
But Euclidean normalizing flows fail to incorporate these structural inductive biases, since Euclidean space cannot embed deep hierarchies without suffering from high distortion \cite{sarkar2011low}. Furthermore, sampling from densities defined on Euclidean space will inevitability generate points that do not lie on the underlying hyperbolic space. 



\xhdr{Present work} 
To address this fundamental limitation, we present the first extension of normalizing flows to hyperbolic spaces. 
Prior works have considered learning models with hyperbolic parameters  \cite{liu2019hyperbolic,nickel2018learning} as well as variational inference with hyperbolic latent variables \cite{nagano2019wrapped,mathieu2019continuous}, but our work represents the first approach to allow flexible density estimation in hyperbolic space. 

To define our normalizing flows we leverage the Lorentz model of hyperbolic geometry and introduce two new forms of coupling, {\em Tangent Coupling} ($\mathcal{TC}$) and {\em Wrapped Hyperboloid Coupling} ($\mathcal{W}\mathbb{H}\mathcal{C}$). These define flexible and invertible transformations capable of transforming sampled points in the hyperbolic space. 
We derive the change of volume associated with these transformations and show that it can be computed efficiently with $\mathcal{O}(n)$ cost, where $n$ is the dimension of the hyperbolic space.
We empirically validate our proposed normalizing flows on structured density estimation, reconstruction, and generation tasks on hierarchical data, highlighting the utility of our proposed approach. \cut{We find that using Wrapped Hyperbolic flows allows for a \red{\%XX} an aggregate absolute improvement over Euclidean flows.}

\section{Background on Hyperbolic Geometry}\label{sec:background}

Within the Riemannian geometry framework, hyperbolic spaces are manifolds with constant negative curvature $K$ and are of particular interest for embedding hierarchical structures. There are multiple models of $n$-dimensional hyperbolic space, such as the hyperboloid $\mathbb{H}_K^n$, also known as the Lorentz model, or the Poincar\'e ball $\mathbb{P}_K^n$. 
Figure \ref{fig:explanatory_fig_1} illustrates some key properties of $\mathbb{H}_K^2$ and $\mathbb{P}_K^2$, highlighting how distances grow exponentially as you move away from the origin and how the shortest paths between distant points tend to go through a common parent (\textit{i.e.}, the origin), giving rise to a hierarchical or tree-like structure. 
In the next section, we briefly review the Lorentz model of hyperbolic geometry. We are not assuming a background in Riemannian geometry, though Appendix \ref{riemannian_geometry_appendix} and \citet{Ratcliffe94} are of use to the interested reader. Henceforth, for notational clarity, we use boldface font to denote points on the hyperboloid manifold.


\subsection{Lorentz Model of Hyperbolic Geometry}
An $n$-dimensional hyperbolic space, $\mathbb{H}^n_K$, is the unique, complete, simply-connected $n$-dimensional Riemannian manifold of constant negative curvature, $K$.  For our purposes, the Lorentz model is the most convenient representation of hyperbolic space, since it is equipped with relatively simple explicit formulas and useful numerical stability properties \cite{nickel2018learning}. We choose the 2D Poincar\'e disk $\mathbb{P}^2_1$ to visualize hyperbolic space because of its conformal mapping to the unit disk. The Lorentz model embeds hyperbolic space $\mathbb{H}^n_K$ within the $n+1$-dimensional Minkowski space, defined as the manifold $\mathbb{R}^{n+1}$ equipped with the following inner product:
\begin{equation}\label{eq:lorentzmetric}
    \langle \textbf{x}, \textbf{y} \rangle_{\mathcal{L}} := -x_0y_0 + x_1y_1 + \dots + x_ny_n,
\end{equation}
which has the type 
$\langle \cdot, \cdot \rangle_{\mathcal{L}}: \mathbb{R}^{n+1} \times \mathbb{R}^{n+1} \to \mathbb{R}$.
It is common to denote this space as $\mathbb{R}^{1,n}$ to emphasize the distinct role of the zeroth coordinate.
In the Lorentz model, we model hyperbolic space as the (upper sheet of) the hyperboloid embedded in Minkowski space. It is a remarkable fact that though the Lorentzian metric (Eq.~\ref{eq:lorentzmetric}) is indefinite,
the induced Riemannian metric $g_{\mb{x}}$ on the unit hyperboloid is positive definite \cite{Ratcliffe94}. The $n$-Hyperbolic space with constant negative curvature $K$ with origin $\textbf{o} = (1/K, 0, \dots, 0)$, is a Riemannian manifold $(\mathbb{H}^{n}_K,g_{\mb{x}})$ where
\begin{equation*}
    \mathbb{H}^{n}_K := \{x \in \mathbb{R}^{n+1}:  \langle \textbf{x}. \textbf{x} \rangle_{\mathcal{L}} = 1/K, \ x_0 > 0, \ K<0 \}.
\end{equation*}


Equipped with this, the induced distance between two points $(\mb{x},\mb{y})$ in $\mathbb{H}^{n}_K$ is given by
\begin{equation}
    d(\textbf{x},\textbf{y})_{\mathcal{L}} := \frac{1}{\sqrt{-K}}\arccosh(-K \langle \textbf{x}, \textbf{y} \rangle_{\mathcal{L}}).
\end{equation}

The tangent space to the hyperboloid at the point $\textbf{p} \in \mathbb{H}_K^n$ can also be described as an embedded subspace of $\mathbb{R}^{1,n}$.  
It is given by the set of points satisfying the orthogonality relation with respect to the Minkowski inner product,\footnote{It is also equivalently known as the Lorentz inner product.}
\begin{equation}
   \mathcal{T}_{\textbf{p}}\mathbb{H}^{n}_K := \{ u: \langle u, \textbf{p} \rangle_{\mathcal{L}} = 0\}
\end{equation}
Of special interest are vectors in the tangent space at the origin of $\mathbb{H}^{n}_K$ whose norm under the Minkowski inner product is equivalent to the conventional Euclidean norm. That is $v \in \mathcal{T}_{\textbf{o}}\mathbb{H}^{n}_K$ is a vector such that $v_0 = 0$ and $||\textbf{v}||_{\mathcal{L}} := \sqrt{\langle \textbf{v}, \textbf{v}
\rangle_{\mathcal{L}}} = ||\textbf{v}||_2$.  Thus \emph{at the origin} the
partial derivatives with respect to the ambient coordinates, $\mathbb{R}^{n+1}$, define the
covariant derivative.  

\xhdr{Projections}
Starting from the extrinsic view by which we consider $\mathbb{R}^{n+1} \supset \mathbb{H}^{n}_K$, we may project any vector $x \in \mathbb{R}^{n+1}$ on to the hyperboloid using the shortest Euclidean distance:
\begin{equation}
    \textnormal{proj}_{\mathbb{H}^{n}_K}(x) = \frac{x}{\sqrt{-K}||x||_{\mathcal{L}}}.
\end{equation}
Furthermore, by definition a point on the hyperboloid satisfies $\langle \textbf{x}, \textbf{x} \rangle_{\mathcal{L}} = 1/K$ and thus when provided with $n$ coordinates $\hat{x} = (x_1, \dots, x_{n})$ we can always determine the missing coordinate to get a point on $\mathbb{H}^n_K$:
\begin{equation}
    \label{eqn:hyperboloid_projection}
    x_0 = \sqrt{||\hat{x}||^2_2 + \frac{1}{K}}.
\end{equation}

\xhdr{Exponential Map}
The exponential map takes a vector, $v$, in the tangent space of a point $\textbf{x} \in \mathbb{H}^{n}_K$ to a point on the manifold---i.e., $\textbf{y} = \textnormal{exp}^K_\textbf{x}(v): \mathcal{T}_{\textbf{x}} \mathbb{H}^{n}_K \to \mathbb{H}^{n}_K$ by moving a unit length along the \textit{geodesic}, $\gamma$ (straightest parametric curve), uniquely defined by $\gamma(0) = \textbf{x}$ with direction $\gamma '(0)= v$. The closed form expression for the exponential map is then given by
\begin{equation}
    \label{eqn:exp_map}
    \textnormal{exp}^K_{\textbf{x}} (v) = \cosh \Big(\frac{||v||_{\mathcal{L}}}{R} \Big)\textbf{x} +  \sinh \Big(\frac{||v||_{\mathcal{L}}}{R} \Big)\frac{Rv}{||{v}||_{\mathcal{L}}},
\end{equation}
where we used the \textit{generalized radius} $R = 1/\sqrt{-K}$ in place of the curvature.

\xhdr{Logarithmic Map}
As the inverse of the exponential map, the logarithmic map takes a point, $\textbf{y}$, on the manifold back to the tangent space of another point $\textbf{x}$ also on the manifold. In the Lorentz model this is defined as
\begin{equation}
    \label{eqn:log_map}
    \log^K_{\textbf{x}}{\textbf{y}} = \frac{\arccosh(\alpha)}{\sqrt{\alpha^2 - 1}}(\textbf{y} - \alpha \textbf{x}),
\end{equation}
where $\alpha = K\langle \textbf{x}, \textbf{y} \rangle_{\mathcal{L}}$.

\xhdr{Parallel Transport}
The parallel transport for two points $\textbf{x},\textbf{y} \in \mathbb{H}^{n}_K$ is a map that carries the vectors in $v \in \mathcal{T}_{\textbf{x}}\mathbb{H}^{n}_K$ to corresponding vectors at $v' \in \mathcal{T}_{\textbf{y}}\mathbb{H}^{n}_K$ along the geodesic. That is vectors are connected between the two tangent spaces such that the covariant derivative is unchanged. Parallel transport is a map that preserves the metric, \ie, $\langle \textnormal{PT}^K_{\textbf{x} \to \textbf{y}}(v), \textnormal{PT}^K_{\textbf{x} \to \textbf{y}}(v') \rangle_{\mathcal{L}} = \langle v, v' \rangle_{\mathcal{L}}$ and in the Lorentz model is given by
\begin{align}
    \label{eqn:parallel_transport}
    \textnormal{PT}^K_{\textbf{x} \to \textbf{y}}(v) & = v - \frac{ \langle \log^K_{\textbf{x}}(\textbf{y}), v \rangle_{\mathcal{L}}}{d(\textbf{x},\textbf{y})_{\mathcal{L}}} (\log^K_{\textbf{x}}(\textbf{y})+ \log^K_{\textbf{y}}(\textbf{x})) \nonumber \\
    & = v + \frac{ \langle \textbf{y}, v \rangle_{\mathcal{L}}}{R^2 - \langle \textbf{x}, \textbf{y} \rangle_{\mathcal{L}}} (\textbf{x}+ \textbf{y}),
\end{align}
where $\alpha$ is as defined above. Another useful property is that the inverse parallel transport simply carries the vectors back along the geodesic and is simply defined as $(\textnormal{PT}^{K}_{\textbf{x} \to \textbf{y}}(v))^{-1} = \textnormal{PT}^K_{\textbf{y} \to \textbf{x}}(v)$.

\subsection{Probability Distributions on Hyperbolic Spaces}\label{sec:hyperprobs}

Probability distributions can be defined on Riemannian manifolds, which include $\mathbb{H}^n_K$ as a special case. One transforms the infinitesimal volume element on the manifold to the corresponding volume element in $\mathbb{R}^n$ as defined by the co-ordinate charts. In particular, given the Riemannian manifold $\mathcal{M}(\textbf{z})$ and its metric $g_\mb{z}$, we have $\int p(\textbf{z}) d\mathcal{M} (\textbf{z}) = \int p(\textbf{z})\sqrt{|g_\mb{z}|}d \textbf{z}$, where $d \textbf{z}$ is the Lebesgue measure. We now briefly survey three distinct generalizations of the normal distribution to Riemannian manifolds.\cut{ following ~\cite{mathieu2019continuous,skopek2019mixed}.}

\xhdr{Riemannian Normal}
The first is the Riemannian normal~\cite{pennec2006intrinsic, said2014new}, which is derived from maximizing the entropy given a Fr\'echet mean $\mu$ and a dispersion parameter $\sigma$.
Specifically, we have $\mathcal{N}_{\mathcal{M}}(\textbf{z} \vert \mu, \sigma^{2}) = \frac{1}{Z} \exp \left( - d_{\mathcal{M}}(\mu, \textbf{z})^2 / 2 \sigma^{2} \right)$, where $d_{\mathcal{M}}$ is the \textit{induced distance} and $Z$ is the normalization constant \cite{said2014new, mathieu2019continuous}.

\xhdr{Restricted Normal}
One can also restrict sampled points from the normal distribution in the ambient space to the manifold.
One example is the Von Mises distribution on the unit circle and its generalized version, \ie, Von Mises-Fisher distribution on the hypersphere~\cite{davidson2018hyperspherical}.

\xhdr{Wrapped Normal}
Finally, we can define a wrapped normal distribution~\cite{falorsi2019reparameterizing,nagano2019wrapped}, which is obtained by (1) sampling from $\mathcal{N}(0,I)$ and then transforming it to a point $v \in \mathcal{T}_\textbf{o}\mathbb{H}_K^n$ by concatenating $0$ as the zeroth coordinate; (2) parallel transporting the sample $v$ from the tangent space at $\textbf{o}$ to the tangent space of another point $\boldsymbol{\mu} $ on the manifold to obtain $u$; (3) mapping $u$ from the tangent space to the manifold using the exponential map at $\boldsymbol{\mu}$.
Sampling from such a distribution is straightforward and the probability density can be obtained via the change of variable formula,
\begin{align}
    \log p(\textbf{z}) = \log p(v) - (n-1) \log \left(\frac{\sinh{ ( \Vert u \Vert_{\mathcal{L}} ) }}{ \Vert u \Vert_{\mathcal{L}} } \right),
\end{align}
where $p(\textbf{z})$ is the wrapped normal distribution and $p(v)$ is the normal distribution in the tangent space of $\textbf{o}$.

\section{Normalizing Flows on Hyperbolic Spaces}
We seek to define flexible and learnable distributions on $\mathbb{H}^n_K$, which will allow us to learn rich approximate posterior distributions for hierarchical data.
To do so, we design a class of invertible parametric hyperbolic functions, $f_i: \mathbb{H}^n_K \to \mathbb{H}^n_K$.
A sample from the approximate posterior can then be obtained by first sampling from a simple base distribution $\mb{z}_0 \sim p(\mb{z})$ defined on $\mathbb{H}^n_K$ and then applying a composition of  functions $f_{i\in[j]}$ from this class: $\mb{z}_j = f_j \circ f_{j-1} \circ \dots \circ f_1(\mb{z}_0)$.

In order to ensure effective and tractable learning, the class of functions $f_i$ must satisfy three key desiderata:
\begin{enumerate}[itemsep=0pt, parsep=0pt, topsep=0pt]
    \item Each function $f_i$ must be invertible. 
    \item We must be able to efficiently sample from the final distribution, $\mb{z}_j = f_j \circ f_{j-1} \circ \dots \circ f_1(\mb{z}_0)$. 
    \item We must be able to efficiently compute the associated change in volume (\textit{i.e.}, the Jacobian determinant) of the overall transformation.
\end{enumerate}
 Given these requirements, the final transformed distribution is given by the change of variables formula:
\begin{equation}
    \log p(\mb{z}_j) = \log p(\mb{z}_0) - \sum_{i=1}^k\log \textnormal{det}\Big|\frac{\partial f_j}{\partial z_{j-1}} \Big|.
    \vspace{-5pt}
\end{equation}

Functions satisfying desiderata 1-3 in Euclidean space are often termed {\em normalizing flows} (Appendix \ref{normalizing_flow_appendix}), and our work extends this idea to hyperbolic spaces. In the following sections, we describe two flows of increasing complexity: Tangent Coupling ($\mathcal{T}C$) and Wrapped Hyperboloid Coupling ($\mathcal{W}HC$). The first approach lifts a standard Euclidean flow to the tangent space at the origin of the hyperboloid.
The second approach modifies the flow to explicitly utilize hyperbolic geometry. Figure \ref{fig:density_estimation} illustrates synthetic densities as learned by our approach on $\mathbb{P}^2_1$. 

\subsection{Tangent Coupling}
Similar to the Wrapped Normal distribution (Section \ref{sec:hyperprobs}), one strategy to define a normalizing flow on the hyperboloid is to use the tangent space at the origin. That is, we first sample a point from our base distribution---which we define to be a Wrapped Normal---and use the logarithmic map at the origin to transport it to the corresponding tangent space. Once we arrive at the tangent space we are free to apply any Euclidean flow before finally projecting back to the manifold using the exponential map. 
This approach leverages the fact that the tangent bundle of a hyperbolic manifold has a well-defined vector space structure, allowing affine transformations and other operations that are ill-defined on the manifold itself. 

Following this idea, we build upon one of the earliest and most well-studied flows: the RealNVP flow \cite{dinh2016density}. At its core, the RealNVP flow uses a computationally symmetric transformation (affine coupling layer), which has the benefit of being fast to evaluate and invert due to its lower triangular Jacobian, whose determinant is cheap to compute. Operationally, the coupling layer is implemented using a binary mask, and partitions some input $\tilde{x}$ into two sets, where the first set, $\tilde{x}_1:=\tilde{x}_{1:d}$, is transformed elementwise independently of other dimensions. The second set, $\tilde{x}_2:=\tilde{x}_{d+1:n}$, is also transformed elementwise but in a way that depends on the first set (see Appendix \ref{Euclidean_RealNVP_appendix} for more details). Since all coupling layer operations occur at $\mathcal{T}_{\textbf{o}}\mathbb{H}^n_K$ we term this form of coupling as Tangent Coupling ($\mathcal{T}C$). 

Thus, the overall transformation due to one layer of our $\mathcal{T}C$ flow is a composition of a logarithmic map, affine coupling defined on $\mathcal{T}_{\textbf{o}}\mathbb{H}^n_k$, and an exponential map:
\begin{align}
    \label{eq:tangent_coupling}
     \tilde{f}^{\mathcal{T}C}(\tilde{x}) &=
     \begin{cases}
     \tilde{z}_{1} &= \tilde{x}_{1} \\
     \tilde{z}_{2} &= \tilde{x}_{2} \odot \sigma(s(\tilde{x}_{1})) + t(\tilde{x_1}) \nonumber
     \end{cases}\\
    f^{\mathcal{T}C}(\mb{x}) &= \textnormal{exp}_{\textbf{o}}^K(\tilde{f}^{\mathcal{T}C}(\log_{\textbf{o}}^K(\textbf{x}))),
\end{align}

where $\tilde{x} = \log_{\textbf{o}}^K(\textbf{x})$ is a point on $\mathcal{T}_{\textbf{o}}\mathbb{H}^n_K$, and $\sigma$ is a pointwise non-linearity such as the exponential function. Functions $s$ and $t$ are parameterized scale and translation functions implemented as neural nets from $\mathcal{T}_{\textbf{o}}\mathbb{H}^{d}_K \to \mathcal{T}_{\textbf{o}}\mathbb{H}^{n-d}_K$.
One important detail is that arbitrary operations on a tangent vector $v \in \mathcal{T}_{\textbf{o}}\mathbb{H}^n_K$ may transport the resultant vector outside the tangent space, hampering subsequent operations.
To avoid this we can keep the first dimension fixed at $v_0 = 0$ to ensure we remain in $\mathcal{T}_{\textbf{o}}\mathbb{H}^{n}_K$.

\begin{figure}[t!]
     \centering
     \includegraphics[width=0.95\linewidth]{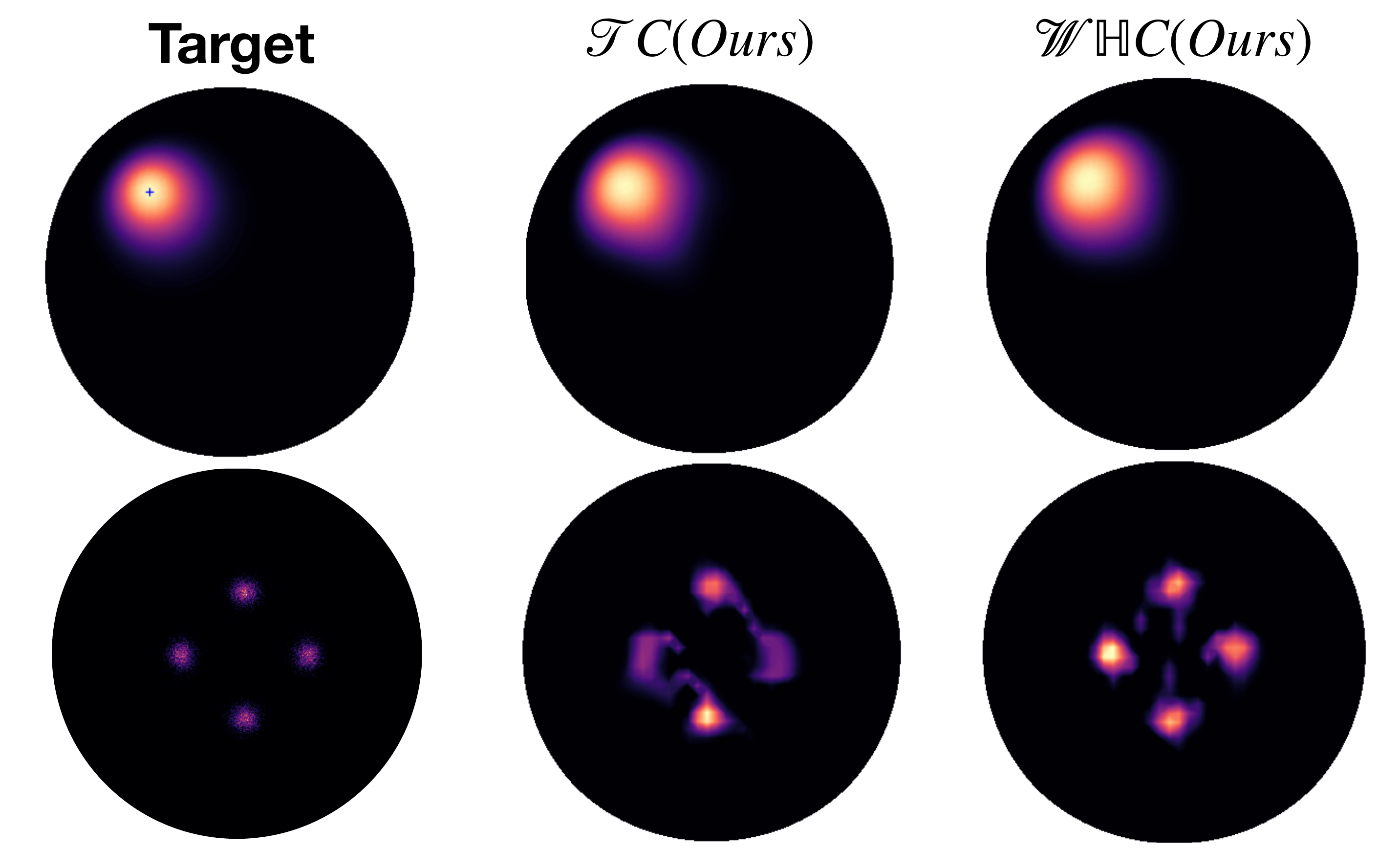}
     \vspace{-10pt}
     \caption{Comparison of density estimation in hyperbolic space for 2D wrapped Gaussian (WG) and mixture of wrapped gaussian (MWG) on $\mathbb{P}^2_1$. Densities are visualized in the Poincar\'e disk. Additional qualitative results can be found in Appendix \ref{additional_results}.}
     \vspace{-10pt}
     \label{fig:density_estimation}
 \end{figure}

Similar to the Euclidean RealNVP, we need an efficient expression for the Jacobian determinant of $f^{\mathcal{T}C}$.
\begin{prop}
The Jacobian determinant of a single $\mathcal{T}C$ layer in \eqref{eq:tangent_coupling} is:
\begin{align}
     \left|\textnormal{det}\Big (\frac{\partial \textbf{y}}{\partial \textbf{x}}\Big)\right| &= \Big(\frac{R\sinh(\frac{||z||_{\mathcal{L}}}{R})}{||z||_{\mathcal{L}}}\Big)^{n-1} \times \prod_{i=d+1}^n\sigma(s(\tilde{x}_1))_i  \nonumber\\
     & \times \Big(\frac{R\sinh(\frac{||\log^K_{\textbf{o}}(\textbf{x})||_{\mathcal{L}}}{R})}{||\log^K_{\textbf{o}}(\textbf{x})||_{\mathcal{L}}}\Big)^{1-n}
\end{align}
where, $ \mb{z} =  \tilde{f}^{\mathcal{T}C}(\tilde{x})$ and $\tilde{f}^{\mathcal{T}C}$ is as defined above.
\end{prop}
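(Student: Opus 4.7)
The plan is to exploit the fact that $f^{\mathcal{TC}}$ is by construction a composition of three maps,
\[
f^{\mathcal{TC}} \;=\; \exp_{\mathbf{o}}^K \;\circ\; \tilde{f}^{\mathcal{TC}} \;\circ\; \log_{\mathbf{o}}^K,
\]
where the middle map is a Euclidean RealNVP coupling acting on $\mathcal{T}_{\mathbf{o}}\mathbb{H}^n_K$. Since each of these is a diffeomorphism between $n$-dimensional manifolds (with $\mathcal{T}_{\mathbf{o}}\mathbb{H}^n_K$ viewed as $\mathbb{R}^n$ via the coordinates $(v_1,\dots,v_n)$, recalling $v_0=0$ is enforced), the chain rule gives the Jacobian determinant of $f^{\mathcal{TC}}$ as the product of the three individual Jacobian determinants. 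I would then compute each factor in turn.

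First, for the exponential map at the origin I would invoke the standard Jacobi-field calculation on a space of constant curvature $K=-1/R^2$: geodesics through $\mathbf{o}$ spread at rate $\sinh(t/R)/(t/R)$ in each of the $n-1$ directions orthogonal to the radial direction, and at unit rate in the radial direction, so the volume distortion of $d\exp_{\mathbf{o}}^K$ at a tangent vector $v$ is
\[
\bigl|\det d\exp_{\mathbf{o}}^K(v)\bigr| \;=\; \left(\frac{R\sinh(\Vert v\Vert_{\mathcal{L}}/R)}{\Vert v\Vert_{\mathcal{L}}}\right)^{n-1}.
\]
Substituting $v=\mathbf{z}=\tilde{f}^{\mathcal{TC}}(\tilde{x})$ gives the first factor. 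I would note that on $\mathcal{T}_{\mathbf{o}}\mathbb{H}^n_K$ the Lorentzian norm $\Vert\cdot\Vert_{\mathcal{L}}$ coincides with the Euclidean norm (since $v_0=0$ there), which is what makes this formula directly applicable. Since $\log_{\mathbf{o}}^K$ is the inverse of $\exp_{\mathbf{o}}^K$, the inverse function theorem gives the third factor as the reciprocal of the same expression evaluated at $\log_{\mathbf{o}}^K(\mathbf{x})$, producing the $1-n$ exponent in the statement.

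For the middle map I would use the standard RealNVP argument: the coupling layer writes $\tilde{z}_1=\tilde{x}_1$ and $\tilde{z}_2=\tilde{x}_2\odot\sigma(s(\tilde{x}_1))+t(\tilde{x}_1)$, so in the basis $(\tilde{x}_1,\tilde{x}_2)$ its Jacobian is block lower triangular with $I_d$ in the top-left block and $\operatorname{diag}(\sigma(s(\tilde{x}_1)))$ in the bottom-right block. Its determinant is therefore $\prod_{i=d+1}^{n}\sigma(s(\tilde{x}_1))_i$, which is the middle factor. Multiplying the three factors yields the stated formula.

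The main obstacle is bookkeeping rather than novelty: one must verify that the coupling never moves the vector out of $\mathcal{T}_{\mathbf{o}}\mathbb{H}^n_K$, so that the zeroth Minkowski coordinate remains fixed at $0$ throughout and the intrinsic $n$-dimensional Jacobians in the three stages compose cleanly; this is why the construction freezes $v_0=0$. Once that is in place, the only nontrivial ingredient is the volume-distortion identity for $\exp_{\mathbf{o}}^K$ on a constant-curvature space, which is classical (and can alternatively be derived by writing $\exp_{\mathbf{o}}^K$ explicitly via \eqref{eqn:exp_map} in polar coordinates on the tangent space and differentiating).
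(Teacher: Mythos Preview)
Your proposal is correct and follows essentially the same approach as the paper: decompose $f^{\mathcal{TC}}$ into $\exp_{\mathbf{o}}^K \circ \tilde{f}^{\mathcal{TC}} \circ \log_{\mathbf{o}}^K$, apply the chain rule, invoke the known volume factor $\bigl(R\sinh(\Vert v\Vert_{\mathcal{L}}/R)/\Vert v\Vert_{\mathcal{L}}\bigr)^{n-1}$ for the exponential map (and its reciprocal for the logarithmic map), and compute the middle factor as the standard lower-triangular RealNVP Jacobian using the Euclidean basis $e_1,\dots,e_n$ of $\mathcal{T}_{\mathbf{o}}\mathbb{H}^n_K$. The only cosmetic difference is that you justify the $\exp_{\mathbf{o}}^K$ factor via Jacobi fields in constant curvature, whereas the paper cites it from \citet{nagano2019wrapped,skopek2019mixed}; the structure of the argument is otherwise identical.
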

\begin{proofsketch}
Here we only provide a sketch of the proof and details can be found in Appendix \ref{tangent_coupling_proof_appendix}. First, observe  that the overall transformation  is a valid composition of functions: $\mb{y} := \textnormal{exp}_{\textbf{o}}^K \circ \tilde{f}^{\mathcal{T}C} \circ \log_{\textbf{o}}^K(\textbf{x})$. Thus, the overall determinant can be computed by chain rule and the identity,  $\textnormal{det}\Big (\frac{\partial \textbf{y}}{\partial \textbf{x}}\Big) =  \textnormal{det}\Big (\frac{\partial \textnormal{exp}_{\textbf{o}}^K(z)}{\partial z}\Big) \cdot \textnormal{det}\Big (\frac{\partial f(\tilde{x})}{\partial \tilde{x}}\Big) \cdot  \textnormal{det}\Big (\frac{\partial \log_{\textbf{o}}^K(\textbf{x})}{\partial \textbf{x}}\Big)$. Tackling each function in the composition individually, $\textnormal{det}\Big (\frac{\partial \textnormal{exp}_{\textbf{o}}^K(z)}{\partial z}\Big) = \Big(\frac{R\sinh(\frac{||z||_{\mathcal{L}}}{R})}{||z||_{\mathcal{L}}}\Big)^{n-1}$ as derived in \citet{skopek2019mixed}. As the logarithmic map is the inverse of the exponential map the Jacobian determinant is simply the inverse of the determinant of the exponential map, which gives the $\textnormal{det}\Big (\frac{\partial \log_{\textbf{o}}^K(\textbf{x})}{\partial \textbf{x}}\Big)$ term. 
For the middle term, we must calculate the directional derivative of $\tilde{f}^{\mathcal{T}C}$ in an orthonormal basis w.r.t. the Lorentz inner product, of $\mathcal{T}_{\textbf{o}}\mathbb{H}^{n}_K$. Since the standard Euclidean basis vectors $e_1, ..., e_n$ are also a basis for $\mathcal{T}_{\textbf{o}}\mathbb{H}^{n}_K$, the Jacobian determinant $\textnormal{det}\Big (\frac{\partial f(\tilde{x})}{\partial \tilde{x}}\Big)$ simplifies to that of the RealNVP flow, which is lower triangluar and is thus efficiently computable in $\mathcal{O}(n)$ time.

\end{proofsketch}
It is remarkable that the middle term in Proposition 1 is precisely the same change in volume associated with affine coupling in RealNVP.
The change in volume due to the hyperbolic space only manifests itself through the exponential and logarithmic maps, each of which can be computed in $\mathcal{O}(n)$ cost. Thus, the overall cost is only slightly larger than the regular Euclidean RealNVP, but still $\mathcal{O}(n)$.

\subsection{Wrapped Hyperboloid Coupling}
\begin{figure}[ht]
    \centering
    \includegraphics[width=0.95\linewidth]{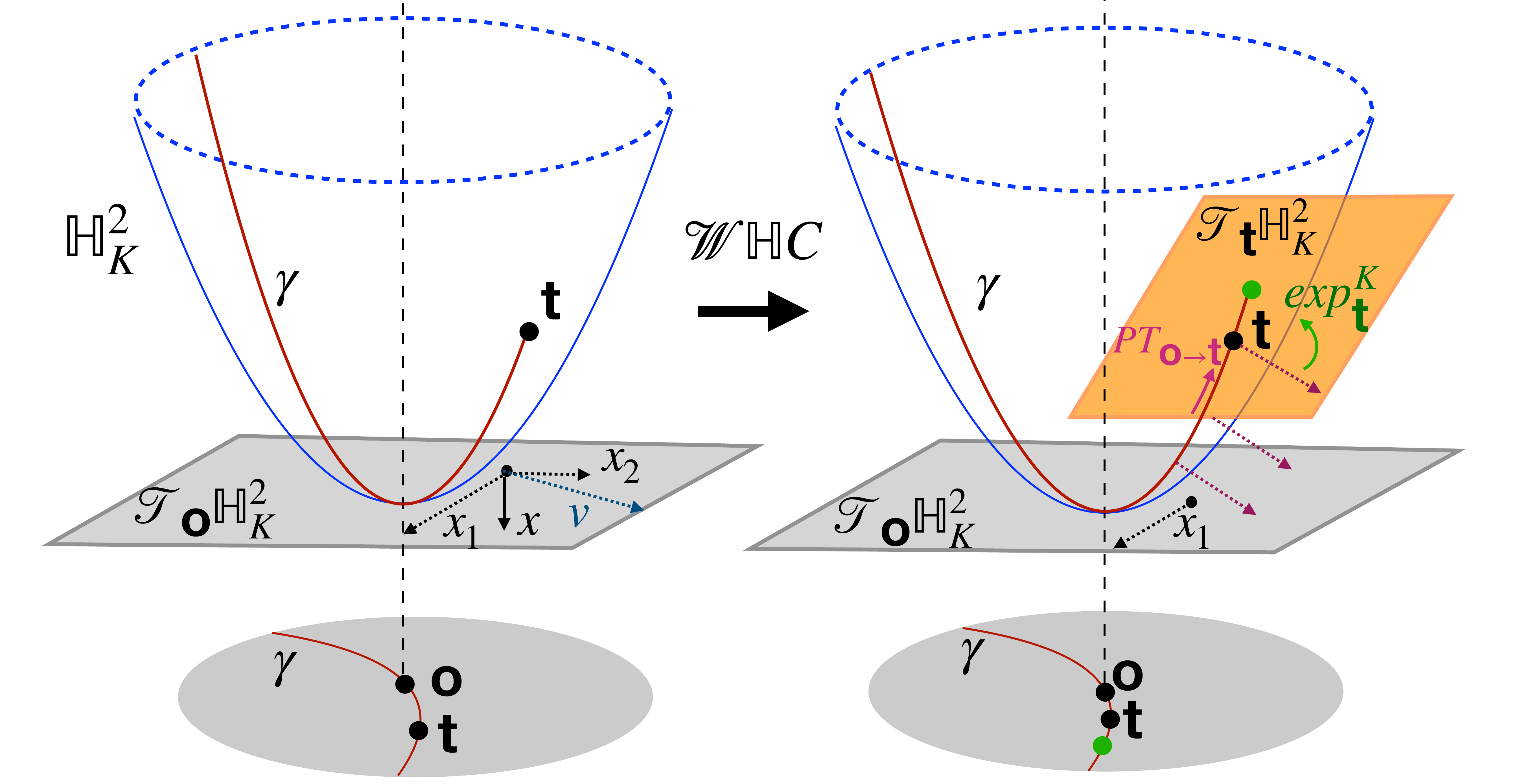}
    \caption{Wrapped Hyperbolic Coupling. The left figure depicts a partitioned input point $\tilde{x}_1:=\tilde{x}_{1:d}$ and $\tilde{x}_2:=\tilde{x}_{d+1:n}$ prior to parallel transport. The right figure depicts the $\tilde{x}_2$ vector after it is transformed, parallel transported, and projected to $\mathbb{H}^n_K$.}
    \label{fig:whc_architecture_diagram}
\end{figure}

\label{wrapped_hyerboloid_coupling_section}
The hyperbolic normalizing flow with $\mathcal{T}C$ layers discussed above operates purely in the tangent space at the origin.
This simplifies the computation of the Jacobian determinant, but anchoring the flow at the origin may hinder its expressive power and its ability to leverage disparate regions of the manifold. 
In this section, we remedy this shortcoming with a new hyperbolic flow that performs translations between tangent spaces via parallel transport. 

We term this transformation {\em Wrapped Hyperboloid Coupling} ($\mathcal{W}\mathbb{H}C$). 
As with the $\mathcal{T}C$ layer, it is a fully invertible transformation $f^{\mathcal{W}\mathbb{H}C}: \mathbb{H}^n_k \to \mathbb{H}^n_k$ with a tractable analytic form for the Jacobian determinant. 
To define a $\mathcal{W}\mathbb{H}C$ layer we first use the logarithmic map at the origin to transport a point to the tangent space. We employ the coupling strategy previously discussed and partition our input vector into two components: $\tilde{x}_1:=\tilde{x}_{1:d}$ and $\tilde{x}_2:=\tilde{x}_{d+1:n}$. Let $\tilde{x} = \log_{\textbf{o}}^K(\mb{x})$ be the point on $\mathcal{T}_{\textbf{o}}\mathbb{H}^n_K$ after the logarithmic map. 
The remainder of the $\mathcal{W}\mathbb{H}C$ layer can be defined as follows;
\begin{align}
\label{wrapped_hyperboloid_coupling_eqn}
\tilde{f}^{\mathcal{W}\mathbb{H}C}(\tilde{x})&=
     \begin{cases}
     \tilde{z}_{1} &= \tilde{x}_{1}  \\
     \tilde{z}_{2} &= \log_{\textbf{o}}^K\Big( \textnormal{exp}_{t(\tilde{x}_{1})}^K\big(\textnormal{PT}_{\textbf{o}\to t(\tilde{x}_{1}) }(v)\big)\Big) \nonumber
    \end{cases}\\
    v &= \tilde{x}_{2} \odot \sigma(s(\tilde{x}_{1})) \nonumber \\
    f^{\mathcal{W}\mathbb{H}C}(\mb{x}) &=  \textnormal{exp}_{\textbf{o}}^K(\tilde{f}^{\mathcal{W}\mathbb{H}C}(\log_{\textbf{o}}^K(\mb{x}))).
\end{align}

Functions $s: \mathcal{T}_{\textbf{o}}\mathbb{H}^{d}_k \to \mathcal{T}_{\textbf{o}}\mathbb{H}^{n-d}_k$ and $t:\mathcal{T}_{\textbf{o}}\mathbb{H}^{d}_k \to \mathbb{H}^n_k$ are taken to be arbitrary neural nets, but the role of $t$ when compared to $\mathcal{T}C$ is vastly different. In particular, the generalization of translation on Riemannian manifolds can be viewed as parallel transport to a different tangent space. Consequently, in Eq.~\ref{wrapped_hyperboloid_coupling_eqn}, the function $t$ predicts a point on the manifold that we wish to parallel transport to.
This greatly increases the flexibility as we are no longer confined to the tangent space at the origin. The logarithmic map is then used to ensure that both $\tilde{z}_1$ and $\tilde{z}_2$ are in the same tangent space before the final exponential map that projects the point to the manifold.

One important consideration in the construction of $t$ is that it should only parallel transport functions of $\tilde{x}_2$. However, the output of $t$ is a point on $\mathbb{H}^n_k$ and without care this can involve elements in $\tilde{x}_1$. To prevent such a scenario we construct the output of $t = [t_0, 0, \dots , 0, t_{d+1}, \dots , t_{n}]$ where elements $t_{d+1:n}$ are used to determine the value of $t_0$ using Eq. \ref{eqn:hyperboloid_projection}, such that it is a point on the manifold and every remaining index is set to zero. Such a construction ensures that only components of any function of $\tilde{x}_2$ are parallel transported as desired. Figure \ref{fig:whc_architecture_diagram} illustrates the transformation performed by the $\mathcal{W}HC$ layer.

\xhdr{Inverse of $\mathcal{W}\mathbb{H}C$}
To invert the flow it is sufficient to show that argument to the final exponential map at the origin itself is invertible. Furthermore, note that $\tilde{x}_1$ undergoes an identity mapping and is trivially invertible. Thus, we need to show that the second partition is invertible, \textit{i.e.} that the following transformation is invertible:
\begin{equation}
     \tilde{z}_{2} = \log_{\textbf{o}}^K\Big( \textnormal{exp}_{t(\tilde{x}_{1})}^K\big(\textnormal{PT}_{\textbf{o}\to t(\tilde{x}_{1}) }(v)\big)\Big).
\end{equation}
As discussed in Section \ref{sec:background}, the parallel transport, exponential map, and logarithmic map all have well-defined inverses with closed forms. Thus, the overall transformation is invertible in closed form:
\begin{align*}
     \begin{cases}
     \tilde{x}_{1} &= \tilde{z}_{1} \\
     \tilde{x}_{2} &= \Big( \textnormal{PT}_{t(\tilde{z}_{1}) \to \textbf{o} }(\log_{t(\tilde{z}_{1})}^K( \textnormal{exp}_{\textbf{o}}^K(\tilde{z}_{2}))\Big) \odot \sigma(s(\tilde{z}_{1}))^{-1} \\
    \end{cases}
\end{align*}

\xhdr{Properties of $\mathcal{W}\mathbb{H}C$}
To compute the Jacobian determinant of the full transformation in Eq. \ref{wrapped_hyperboloid_coupling_eqn} we proceed by analyzing the effect of $\mathcal{W}\mathbb{H}C$ on valid orthonormal bases w.r.t. the Lorentz inner product for the tangent space at the origin. We state our main result here and provide a sketch of the proof, while the entire proof can be found in Appendix \ref{wrapped_coupling_proof_appendix}. 
\begin{prop}
The Jacobian determinant of the function $\tilde{f}^{\mathcal{W}\mathbb{H}C}$ in \eqref{wrapped_hyperboloid_coupling_eqn} is:
\begin{multline}
  \left|\textnormal{det}\left(\frac{\partial\mb{y}}{\partial \mb{x}}\right)\right| = \prod_{i=d+1}^n\sigma(s(\tilde{x}_1))_i \times \Big(\frac{R \sinh(\frac{||q||_{\mathcal{L}}}{R})}{||q||_{\mathcal{L}}}\Big)^{l} \\
  \times \Big(\frac{R\sinh(\frac{||\log_{\textbf{o}}^K(\hat{\textbf{q}})||_{\mathcal{L}}}{R})}{||\log_{\textbf{o}}^K(q)||_{\mathcal{L}}}\Big)^{-l} \ \times  \Big(\frac{R\sinh(\frac{||\tilde{z}||_{\mathcal{L}}}{R})}{||\tilde{z}||_{\mathcal{L}}}\Big)^{n-1}
 \  \\ \times \Big(\frac{R\sinh(\frac{||\log_{\textbf{o}}^K(\textbf{x})||_{\mathcal{L}}}{R})}{||\log_{\textbf{o}}^K(\textbf{x})||_{\mathcal{L}}}\Big)^{1-n},
\end{multline}
where $\tilde{z} = \textnormal{concat}(\tilde{z}_1, \tilde{z}_2)$, the constant $l = n-d$, $\sigma$ is a non-linearity, $q = \textnormal{PT}_{\textbf{o}\to t(\tilde{x}_1) }(v)$ and $\hat{\textbf{q}} = \textnormal{exp}_{\textbf{t}}^K(q)$.
\end{prop}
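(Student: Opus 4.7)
The plan is to mimic the chain-rule argument used in the proof of Proposition~1, with extra care to handle the parallel transport to a tangent space away from the origin. Starting from the decomposition $f^{\mathcal{W}\mathbb{H}C} = \exp_{\mb{o}}^K \circ \tilde{f}^{\mathcal{W}\mathbb{H}C} \circ \log_{\mb{o}}^K$, the outer logarithmic and exponential maps at the origin account for the last two parenthesized factors in the statement, namely $\bigl(R\sinh(\|\tilde{z}\|_{\mathcal{L}}/R)/\|\tilde{z}\|_{\mathcal{L}}\bigr)^{n-1}$ and $\bigl(R\sinh(\|\log_{\mb{o}}^K(\mb{x})\|_{\mathcal{L}}/R)/\|\log_{\mb{o}}^K(\mb{x})\|_{\mathcal{L}}\bigr)^{1-n}$, by the same Skopek et al.\ formula already invoked in Proposition~1.

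The remaining task is to compute $\det(\partial \tilde{f}^{\mathcal{W}\mathbb{H}C}/\partial \tilde{x})$ in a Lorentz-orthonormal basis of $T_{\mb{o}}\mathbb{H}_K^n$, for which the Euclidean basis $e_1,\ldots,e_n$ is a convenient choice. Because $\tilde{z}_1 = \tilde{x}_1$, the Jacobian is block lower-triangular with a $d\times d$ identity block at the top-left, so its determinant reduces to $\det(\partial \tilde{z}_2/\partial \tilde{x}_2)$. I then decompose $\tilde{z}_2$ into a chain of four elementary maps: (i) the componentwise scaling $\tilde{x}_2 \mapsto v = \tilde{x}_2 \odot \sigma(s(\tilde{x}_1))$, contributing $\prod_{i=d+1}^n \sigma(s(\tilde{x}_1))_i$; (ii) the parallel transport $v \mapsto q = \mathrm{PT}_{\mb{o}\to t(\tilde{x}_1)}(v)$, which is a Lorentz isometry and therefore contributes $1$; (iii) the exponential map $q \mapsto \hat{\mb{q}} = \exp_{t(\tilde{x}_1)}^K(q)$; and (iv) the logarithmic map $\hat{\mb{q}} \mapsto \tilde{z}_2 = \log_{\mb{o}}^K(\hat{\mb{q}})$.

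For steps (iii) and (iv), the key structural fact is the prescribed form $t = [t_0, 0, \ldots, 0, t_{d+1}, \ldots, t_n]$ enforced by the construction of the flow. Because both $t$ and $v$ vanish on coordinates $1,\ldots,d$, the explicit parallel transport formula in \eqref{eqn:parallel_transport} shows that $q$ also vanishes on those coordinates, acquiring only an extra contribution in the zeroth coordinate through the vector $\mb{o}+t$. It follows that $\hat{\mb{q}}$ lies in the sub-hyperboloid where coordinates $1,\ldots,d$ are zero, and that $\log_{\mb{o}}^K(\hat{\mb{q}})$ lives in the $l$-dimensional subspace of $T_{\mb{o}}\mathbb{H}_K^n$ spanned by $e_{d+1},\ldots,e_n$. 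One can therefore pick adapted Lorentz-orthonormal bases at $t(\tilde{x}_1)$ and at $\hat{\mb{q}}$ whose first $l$ vectors span precisely the subspaces through which $\tilde{x}_2$ variations propagate, and apply the Skopek et al.\ Jacobian formula along those restricted bases. This yields the two remaining factors $\bigl(R\sinh(\|q\|_{\mathcal{L}}/R)/\|q\|_{\mathcal{L}}\bigr)^{l}$ and $\bigl(R\sinh(\|\log_{\mb{o}}^K(\hat{\mb{q}})\|_{\mathcal{L}}/R)/\|\log_{\mb{o}}^K(\hat{\mb{q}})\|_{\mathcal{L}}\bigr)^{-l}$, and multiplying all five contributions produces the stated determinant.

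The main obstacle is the bookkeeping in steps (iii) and (iv): the tangent at $\hat{\mb{q}}$ to the geodesic emanating from $t(\tilde{x}_1)$ and the tangent at $\hat{\mb{q}}$ to the geodesic emanating from $\mb{o}$ do not in general coincide, so the Jacobi-field decomposition into radial and tangential components cannot be carried out in a single orthonormal frame that simultaneously diagonalises both $d\exp_{t(\tilde{x}_1)}^K|_q$ and $d\log_{\mb{o}}^K|_{\hat{\mb{q}}}$. The delicate part of the appendix argument is therefore to keep track of how these two non-aligned frames interact on the restricted $l$-dimensional subspaces, and to confirm that the combined volume change takes exactly the stated form. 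All remaining steps are routine applications of the chain rule and of the closed-form expressions for the exponential and logarithmic maps recalled in Section~\ref{sec:background}.
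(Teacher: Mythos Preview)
Your approach is correct and matches the paper's: Appendix~D formalizes your block-lower-triangular reduction and the restriction to the sub-hyperboloid on coordinates $\{0,d+1,\ldots,n\}$ as Lemma~1, and then applies the chain rule with Facts 2--4 on that lower-dimensional hyperboloid exactly as you outline in steps (i)--(iv).

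Your stated ``obstacle'' is not a genuine difficulty, and the paper spends no effort on it. The Jacobian determinants of $\exp_{t}^K$ and $\log_{\mb{o}}^K$ in Facts 2--3 are volume ratios between the Riemannian measures on domain and codomain; as such they are intrinsic and basis-independent, so the chain rule multiplies them directly regardless of whether the two radial directions at $\hat{\mb{q}}$ align. A common diagonalising frame would only be needed if you wanted the Jacobian \emph{matrix} of the composite $\log_{\mb{o}}^K\circ\exp_{t}^K$, not its determinant. Once you have reduced to the sub-hyperboloid (which you do correctly via the prescribed form of $t$), the product of the four individual volume factors is the answer with no further bookkeeping.
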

\begin{proofsketch}
We first note that the exponential and logarithmic maps applied at the beginning and end of the $\mathcal{W}\mathbb{H}C$ can be dealt with by appealing to the chain rule and the known Jacobian determinants for these functions as used in Proposition 1.  
Thus, what remains is the following term: $\left|\textrm{det}\left(\frac{\partial z}{\partial \tilde{x}}\right)\right|$. To evaluate this term we rely on the following Lemma.
\begin{lemma}
Let $h : \mc{T}_\mb{o}\mbb{H}^n_k \rightarrow \mc{T}_\mb{o}\mbb{H}^n_k$ be a function defined as:
\begin{equation}\label{eq:g2}
h(\tilde{x}) = z = \textnormal{concat}(\tilde{z_1}, \tilde{z_2}).
\end{equation}
Now, define a function $h^* : \mc{T}_\mb{o}\mbb{H}^{n-d} \rightarrow \mc{T}_\mb{o}\mbb{H}^{n-d}$ which acts on the subspace of $\mc{T}_\mb{o}\mbb{H}^{n-d}$ corresponding to the standard basis elements $e_{d+1}, ..., e_n$ as
\begin{equation}\label{eq:gstar2}
h^*(\tilde{x}_{2}) =   \log_{\mb{o}_{2}}^K\Big( \textnormal{exp}_{\mb{t}_{2}}^K\big(\textnormal{PT}_{\mb{o}_{2} \to \mb{t}_{2}}(v)\big)\Big),
\end{equation}
where $\tilde{x}_{2}$ denotes the portion of the vector $\tilde{x}$ corresponding to the standard basis elements $e_{d+1}, ..., e_n$ and $s$ and $t$ are constants (which depend on $\tilde{x}_{1}$).
In Equation \eqref{eq:gstar2}, we use $\mb{o}_{2} \in \mbb{H}^{n-d}$ to denote the vector corresponding to only the dimensions $d+1, ..., n$ and similarly for $\mb{t}_{2}$.
Then we have that
\begin{equation}
    \left|\textnormal{det}\left(\frac{\partial z}{\partial \tilde{x}}\right)\right| =    \left|\textnormal{det}\left(\frac{\partial h^*(\tilde{x}_{d+1:n})}{\partial \tilde{x}_{d+1:n})}\right)\right|.
\end{equation}
\end{lemma}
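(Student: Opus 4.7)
First, I would use the coupling structure to peel off the identity block, reducing the full Jacobian determinant to the determinant of a single $(n-d)\times(n-d)$ block. Then I would exploit the particular zero pattern imposed on $t(\tilde{x}_1)$ and on $v$ to show that this remaining block coincides with the Jacobian of $h^*$ defined intrinsically on $\mathbb{H}^{n-d}_K$.

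Identify $\mathcal{T}_{\mathbf{o}}\mathbb{H}^n_K$ with $\mathbb{R}^n$ through the standard basis $e_1,\ldots,e_n$ (the zeroth ambient coordinate is pinned to $0$) and split coordinates as $\tilde{x}=(\tilde{x}_1,\tilde{x}_2)$, $z=(\tilde{z}_1,\tilde{z}_2)$. Since $\tilde{z}_1=\tilde{x}_1$, the Jacobian is block lower triangular,
\begin{equation*}
\frac{\partial z}{\partial\tilde{x}} = \begin{pmatrix} I_d & 0 \\ \partial\tilde{z}_2/\partial\tilde{x}_1 & \partial\tilde{z}_2/\partial\tilde{x}_2 \end{pmatrix},
\end{equation*}
so $\bigl|\textnormal{det}(\partial z/\partial\tilde{x})\bigr|=\bigl|\textnormal{det}(\partial\tilde{z}_2/\partial\tilde{x}_2)\bigr|$ and the off-diagonal block drops out.

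With $\tilde{x}_1$ held fixed, both $s(\tilde{x}_1)$ and $t(\tilde{x}_1)$ act as constants. By construction $t(\tilde{x}_1)=(t_0,0,\ldots,0,t_{d+1},\ldots,t_n)$, vanishing in slots $1,\ldots,d$, and the vector $v=\tilde{x}_2\odot\sigma(s(\tilde{x}_1))$ is embedded into $\mathcal{T}_{\mathbf{o}}\mathbb{H}^n_K$ with zeros in slots $0,1,\ldots,d$. The parallel transport formula $\textnormal{PT}_{\mathbf{o}\to t(\tilde{x}_1)}(v)=v+c\,(\mathbf{o}+t(\tilde{x}_1))$ then leaves slots $1,\ldots,d$ equal to zero, because those slots vanish in both $v$ and $\mathbf{o}+t(\tilde{x}_1)$. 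The exponential map at $t(\tilde{x}_1)$ preserves this zero pattern, because its closed form only rescales $t(\tilde{x}_1)$ and the transported vector by $\cosh$ and $\sinh$ of a Lorentz norm; the final logarithmic map at $\mathbf{o}$ preserves it for the same reason. Hence $\tilde{z}_2$ has nontrivial components only in slots $d+1,\ldots,n$.

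What remains is to observe that each of these three operations, restricted to the active slots $\{0,d+1,\ldots,n\}$, is literally the same operation performed intrinsically in $\mathbb{H}^{n-d}_K$ with origin $\mathbf{o}_2=(R,0,\ldots,0)$ and target $\mathbf{t}_2=(t_0,t_{d+1},\ldots,t_n)$. This is immediate from the closed-form expressions in Section~2: every quantity appearing is either a Lorentz inner product, a Lorentz norm, or a scalar transcendental function of these, and the Lorentz inner products in $\mathbb{R}^{n+1}$ and $\mathbb{R}^{n-d+1}$ agree on vectors supported on the active slots. Therefore $(\tilde{z}_2)_{d+1:n}=h^*(\tilde{x}_{d+1:n})$ as maps of $\tilde{x}_2$, giving $\partial\tilde{z}_2/\partial\tilde{x}_2 = \partial h^*/\partial\tilde{x}_{d+1:n}$ and hence the claimed determinantal identity. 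The main obstacle is bookkeeping—carefully propagating the zero pattern through each operation and checking that the restriction literally equals the intrinsic $\mathbb{H}^{n-d}_K$ operation—but there is no deeper conceptual step once the structural constraint on $t$ is exploited.
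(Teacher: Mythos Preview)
Your proposal is correct and follows essentially the same route as the paper: obtain the block lower-triangular form of $\partial z/\partial\tilde{x}$ from the coupling structure $\tilde{z}_1=\tilde{x}_1$, then identify the lower-right block with $\partial h^*/\partial\tilde{x}_{d+1:n}$ by using the zero pattern that $t(\tilde{x}_1)$ enforces in slots $1,\ldots,d$. The paper arrives at the block form via directional derivatives along the basis $e_2,\ldots,e_n$ and simply asserts at the outset that the log--exp--PT composite, when padded with zeros, equals $h^*$; your explicit propagation of the zero pattern through each closed-form map and the observation that Lorentz inner products on the active slots agree with those in $\mathbb{R}^{n-d+1}$ is a more careful justification of that same identification, not a different argument.
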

The proof for Lemma 1 is provided in Appendix \ref{wrapped_coupling_proof_appendix}. Using Lemma 1, and the fact that $|\textnormal{det}(\textnormal{PT}_{\textbf{u}\to\textnormal{t}}(v))| = 1$ \cite{nagano2019wrapped} we are left with another composition of functions but on the subspace $\mathcal{T}_{\textbf{o}}\mathbb{H}^{n-d}$. The Jacobian determinant for these functions, are simply that of the logarithmic map, exponential map and the argument to the parallel transport which can be easily computed as $\prod_{i=d+1}^n \sigma(s(\tilde{x}_1))$. 
\end{proofsketch}
The cost of computing the change in volume for one $\mathcal{W}\mathbb{H}C$ layer is $\mathcal{O}(n)$ which is the same as a $\mathcal{T}C$ layer plus the added cost of the two new maps that operate on the lower subspace of basis elements.

\section{Experiments}
\label{experiments}
We evaluate our $\mathcal{T}C$-flow and $\mathcal{W}\mathbb{H}C$-flow on three tasks:  structured density estimation, graph reconstruction, and graph generation.\footnote{\url{https://github.com/joeybose/HyperbolicNF}} Throughout our experiments, we rely on three main baselines. 
In Euclidean space, we use Gaussian latent variables and affine coupling flows \cite{dinh2016density}, denoted $\mathcal{N}$ and $\mathcal{N}C$, respectively. In the Lorentz model, we use Wrapped Normal latent variables, $\mathbb{H}$-VAE, as an analogous baseline \cite{nagano2019wrapped}. Since all model parameters are defined on Euclidean tangent spaces, models can be trained with conventional optimizers like Adam \cite{kingma2014adam}. Following previous work, we also consider the curvature $K$ as a learnable parameter with a warmup of $10$ epochs,
and we clamp the max norm of vectors to $40$ before any logarithmic or exponential map \cite{skopek2019mixed}. Appendix \ref{model_arch_and_hyperparams} contains details on model architectures and implementation details. 

\subsection{Structured Density Estimation}
We first consider structured density estimation in a canonical VAE setting \cite{kingma2013auto}, where we seek to learn rich approximate posteriors using normalizing flows and evaluate the marginal log-likelihood of test data. Following work on hyperbolic VAEs, we test the approaches on a branching diffusion process (BDP) and dynamically binarized MNIST \cite{mathieu2019continuous,skopek2019mixed}. 

To estimate the log likelihood we perform importance sampling using 500 samples from the test set \cite{burda2015importance}. Our results are shown in Tables \ref{table:bdp_table} and \ref{table:mnist_table}.
On both datasets we observe our hyperbolic flows provide improvements when using latent spaces of low dimension.
This result matches theoretical expectations---\eg, that trees can be perfectly embedded in $\mathbb{H}^2_K$---and dovetails with previous work on graph embedding \cite{nickel2017poincare}, thus  highlighting the benefit of leveraging hyperbolic space is most prominent in small dimensions. However, as we increase the latent dimension, the Euclidean approaches can compensate for this intrinsic geometric limitation. In the case of BDP we note that the data is indeed a noisy binary tree, which theoretically can be represented in a 2-D hyperbolic space and thus moving to higher dimensional latent space is not beneficial.  
\cut{
and we observe significant gains on both datasets when the latent dimension is small.\cut{In particular, $\mathcal{W}\mathbb{H}C$ performs the best with 2-dimensional latents with a relative improvement of \red{\%XX} over $\mathcal{N}C$ normalizing flows.} We reconcile this result by recalling that even in two dimensions hyperbolic spaces have more room to embed hierarchies. As we increase the latent dimension however we see that Euclidean based approaches outperform our proposed models which are line with similar observation in prior work \cite{nickel2017poincare}.}

\begin{table}[ht]
\begin{small}
\begin{center}
\begin{tabular}{lccccr}
    \toprule
    Model   &  BDP-2 & BDP-4 & BDP-6\\
    \midrule
    $\mathcal{N}$-VAE & $-55.4_{\pm 0.2}$  & $-55.2_{\pm 0.3}$& $-56.1_{\pm 0.2}$   \\
    $\mathbb{H}$-VAE & $-\textbf{54.9}_{\pm 0.3}$& $-55.4_{\pm 0.2}$ &  $-58.0_{\pm 0.2}$\\
    \cut{$\mathcal{P}$-VAE$^*$ & $-55.6_{\pm 0.2}$ & - &-  \\}
    \cut{$\mathbb{U}$-VAE & &  & $-55.8_{\pm 0.4}$  \\}
    $\mathcal{N}C$ & $-55.4_{\pm 0.4}$ & $ \textbf{-54.7}_{\pm 0.1}$ & $\textbf{-55.2}_{\pm 0.3}$  \\
    $\mathcal{T}C$& $\textbf{-54.9}_{\pm 0.1}$& $-55.4_{\pm 0.1}$& $-57.5_{\pm0.2}$\\
    $\mathcal{W}\mathbb{H}C$& $\textbf{-55.1}_{\pm 0.4}$ & $-55.2_{\pm 0.2}$& $-56.9_{\pm 0.4}$\\
    \bottomrule
\end{tabular}
\caption{Test Log Likelihood on Binary Diffusion Process versus latent dimension. All normalizing flows use 2-coupling layers.}
\label{table:bdp_table}
\end{center}
\vskip -0.1in
\end{small}
\end{table}

\begin{table}[ht]
\begin{small}
\begin{center}
\begin{tabular}{lcccc}
    \toprule
    Model   &  \shortstack{MNIST\\2} & \shortstack{MNIST\\4} & \shortstack{MNIST\\6}  \\
    \midrule
    $\mathcal{N}$-VAE &$-139.5_{\pm 1.0}$& $-115.6_{\pm0.2}$ & $-100.0_{\pm0.02}$ \\
    $\mathbb{H}$-VAE & $*$ & $-113.7_{\pm0.9}$& $-99.8_{\pm0.2}$ \\
    \cut{$\mathcal{P}$-VAE$^*$ & $-142.5_{\pm 0.4}$ & $-97.7_{\pm0.2}$&  \\}
    \cut{$\mathbb{U}$-VAE$^*$ & - & $-97.3_{\pm 0.2}$ &  \\}
    $\mathcal{N}C$ &  $-139.2_{\pm 0.4}$ & $-115.2_{\pm0.6}$& $\textbf{-98.7}_{0.3}$ \\
    $\mathcal{T}C$  & $*$& $ \textbf{-112.5}_{\pm0.2}$&$-99.3_{\pm0.2}$  \\
    $\mathcal{W}\mathbb{H}C$ & $\textbf{-136.5}_{\pm 2.1}$ & $\textbf{-112.8}_{\pm0.5}$ &$-99.4_{\pm0.2}$ \\
    \bottomrule
\end{tabular}
\caption{Test Log Likelihood on MNIST averaged over 5 runs verus latent dimension. * indicates numerically unstable settings.}
\label{table:mnist_table}
\end{center}
\vskip -0.1in
\vspace{-13pt}
\end{small}
\end{table}

\subsection{Graph Reconstruction}
We evaluate the utility of our hyperbolic flows by conducting experiments on the task of link prediction using graph neural networks (GNNs) \cite{scarselli2008graph} as an inference model. Given a simple graph $\mathcal{G}=(\V,A, X)$, defined by a set of nodes $\mathcal{V}$, an adjacency matrix $A \in \mathbb{Z}^{|\mathcal{V}| \times |\mathcal{V}|}$ and node feature matrix $X \in \mathbb{R}^{|\mathcal{V}| \times n}$, we learn a VGAE \cite{kipf2016variational} model whose inference network, $q_\phi$, defines a distribution over node embeddings $q_\phi(Z | A, X)$. To score the likelihood of an edge existing between pairs of nodes we use an inner product decoder: $p(A_{u,v}=1|z_u,z_v) = \sigma(z_u^Tz_v)$, with dot products computed in $\mathcal{T}_{\textbf{o}}\mathbb{H}^n_K$ when necessary. Given these components, the inference GNNs are trained to maximize the variational lower bound on a training set of edges. 

We use two different disease datasets taken from \citep{chami2019hyperbolic} and \citep{mathieu2019continuous}\footnote{We uncovered issues with the two remaining datasets in \cite{mathieu2019continuous} and thus omit them (Appendix \ref{dataset_issues}).} for evaluation purposes. Our chosen datasets reflect important real world use cases where the data is known to contain hierarchies. One such measure to determine how tree-like a given graph is known to be Gromov’s $\delta$-hyperbolicity and traditional link prediction datasets such as Cora and Pubmed \cite{yang2016revisiting} were found to lack such a property and are not suitable candidates to evaluate our proposed approach \cite{chami2019hyperbolic}. The first dataset Diseases-\RNum{1} is composed of a network of disorders and disease genes linked by the known disorder–gene associations \cite{goh2007human}. In the second dataset Diseases-\RNum{2}, we build tree networks of a SIR disease spreading model \cite{anderson1992infectious}, where node features determine the susceptibility to the disease. In Table \ref{graph_embeddings_table} we report the AUC and average precision (AP) on the test set.
We observe consistent improvements when using hyperbolic $\mathcal{W}\mathbb{H}C$ flow. Similar to the structured density estimation setting, the performance gains of $\mathcal{W}\mathbb{H}C$ are best observed in low-dimensional latent spaces.

\begin{table}[ht]
\begin{small}
\begin{center}
\begin{tabular}{lcccc}
    \toprule
    Model   & \shortstack{Dis-\RNum{1}\\AUC} & \shortstack{Dis-\RNum{1}\\AP}  & \shortstack{Dis-\RNum{2}\\AUC} & \shortstack{Dis-\RNum{2}\\AP}  \\
    \midrule
    $\mathcal{N}$-VAE & $0.90_{\pm 0.01}$ &
    $0.92_{\pm 0.01}$ &
    $0.92_{\pm 0.01}$ &
    $0.91_{\pm 0.01}$
    
    \\
    $\mathbb{H}$-VAE & $0.91_{\pm 5\textnormal{e-3}}$ &
    $0.92_{\pm 5\textnormal{e-3}}$ &
    $0.92_{\pm 4\textnormal{e-3}}$ &
    $0.91_{\pm 0.01}$ 
    
    \\
    $\mathcal{N}C$ & $0.92_{\pm 0.01}$ &
    $0.93_{\pm 0.01}$ &
     $0.95_{\pm 4\textnormal{e-3}}$ &
    $0.93_{\pm 0.01}$ 
    
    \\
    $\mathcal{T}C$ & $\textbf{0.93}_{\pm 0.01}$ &
    $0.93_{\pm 0.01}$ &
   $\textbf{0.96}_{\pm 0.01}$ &
     $0.95_{\pm 0.01}$ 
    
    \\
    $\mathcal{W}\mathbb{H}C$ & $\textbf{0.93}_{\pm 0.01}$&
    $\textbf{0.94}_{\pm 0.01}$ &
    $\textbf{0.96}_{\pm 0.01}$ &
    $\textbf{0.96}_{\pm 0.01}$
    \\
    \bottomrule
\end{tabular}
\end{center}
\end{small}
\caption{Test AUC and Test AP on Graph Embeddings where Dis-\RNum{1} has latent dimesion 6 and Dis-\RNum{2} has latent dimension 2.}
\label{graph_embeddings_table}
\vskip -0.1in
\end{table}

\subsection{Graph Generation}

Finally, we explore the utility of our hyperbolic flows for generating hierarchical structures. 
As a synthetic testbed, we construct datasets containing uniformly random trees as well as uniformly random lobster graphs \cite{golomb1996polyominoes}, where each graph contains between 20 to 100 nodes. Unlike prior work on graph generation---\textit{i.e.,} \cite{liu2019graph}---our datasets are designed to have explicit hierarchies, thus enabling us to test the utility of hyperbolic generative models.
We then train a generative model to learn the distribution of these graphs. 
We expect the hyperbolic flows to provide a significant benefit for generating valid random trees, as well as learning the distribution of lobster graphs, which are a special subset of trees. 

We follow the two-stage training procedure outlined in Graph Normalizing Flows \cite{liu2019graph} in that we first train an autoencoder to give node-level latents on which we train an normalizing flow for density estimation. Empirically, we find that using GRevNets \cite{liu2019graph} and defining edge probabilities using a distance-based decoder consistently leads to better generation performance. Thus, we define edge probabilities as $p(A_{u,v}=1|z_u,z_v) = \sigma((-d_{\mathcal{G}}(u,v) - b)/\tau)$ where $b$ and $\tau$ are learned edge specific bias and temperature parameters. At inference time, we first sample the number of nodes to generate from the empirical distribution of the dataset. We then independently sample node latents from our prior, beginning with a fully connected graph, and then push these samples through our learned flow to give refined edge probabilities. 

\begin{figure*}
    \centering
    \includegraphics[width=\textwidth]{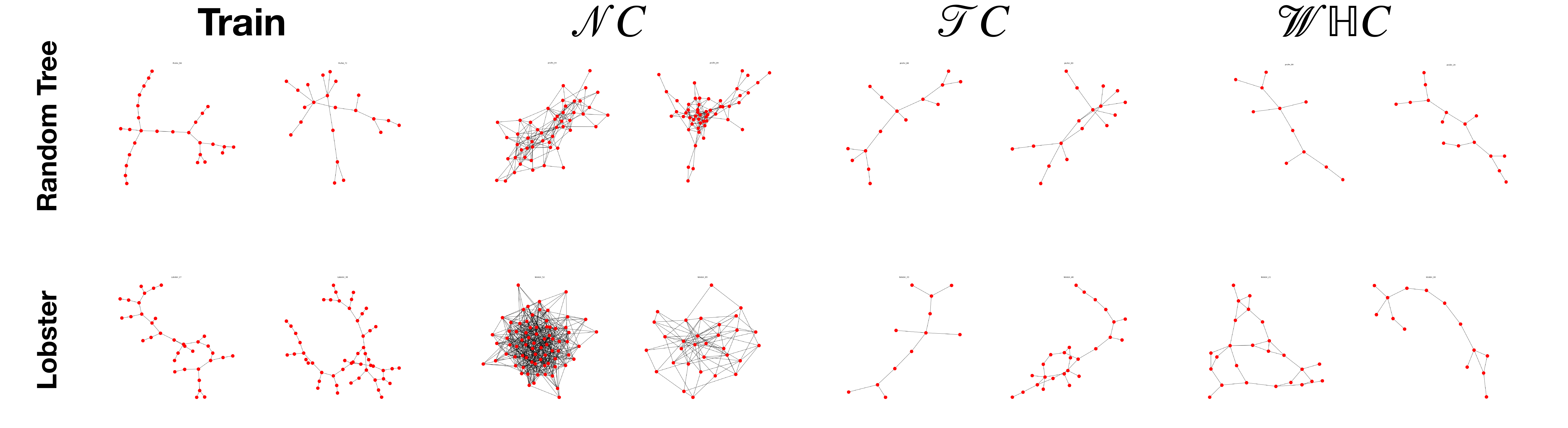}
    \vspace{-5mm}
    \caption{Selected qualitative results on graph generation for lobster and random tree graph.}
    \label{fig:graph_generation_pic}
\end{figure*}

To evaluate the various approaches, we construct $100$ training graphs for each dataset to train our model.
Figure \ref{fig:graph_generation_pic} shows representative samples generated by the various approaches.
We see that hyperbolic normalizing flows learn to generate tree-like graphs and also match the specific properties of the lobster graph distribution, whereas the Euclidean flow model tends to generate densely connected graphs with many cycles (or else disconnected graphs). 
To quantify these intuitions, Table \ref{tab:randtrees} contains statistics on how often the different models generate valid trees (denoted by ``accuracy''), as well as the average number of triangles and the average global clustering coefficients for the generated graphs. 
Since the target data is random trees, a perfect model would achieve 100\% accuracy, with no triangles, and a global clustering of 0 for all graphs. As a representative Euclidean baseline we employ Graph Normalizing Flows (GNFs) which is denoted as $\mathcal{N}C$ in Table \ref{tab:randtrees} and Figure \ref{fig:lobster_graph_gen}.
We see that the hyperbolic models generate valid trees more often, and they generate graphs with fewer triangles and lower clustering on average.
Finally, to evaluate how well the models match the specific properties of the lobster graphs, we follow \citet{liao2019efficient} and report the MMD distance between the generated graphs and a test set for various graph statistics (Figure \ref{fig:lobster_graph_gen}).
Again, we see that the hyperbolic approaches significantly outperform the Euclidean normalizing flow. 

\begin{table}[]
\label{graph_gen_table}
\begin{center}
\begin{tabular}{llll}
    \toprule
    Model   & Accuracy & Avg. Clust. & Avg. GC.\\
    \midrule
    $\mathcal{N}C$ & $56.6_{\pm 5.5}$ & $40.9_{\pm 42.7}$ & $0.34_{\pm0.10}$\\
    $\mathcal{T}C$ & $32.1_{\pm 1.9}$ & $98.3_{\pm 89.5}$ & $0.25_{\pm 0.12}$\\
    $\mathcal{W}\mathbb{H}C$ & $\textbf{62.1}_{\pm 10.9}$ & $\textbf{21.1}_{\pm 13.4}$ & $\textbf{0.13}_{\pm0.07}$\\
    \bottomrule
\end{tabular}
\end{center}
\caption{Generation statistics on random trees over $5$ runs.}
\label{tab:randtrees}
\vspace{-15pt}
\end{table}

\begin{figure}
    \centering
    \includegraphics[width=0.85\linewidth]{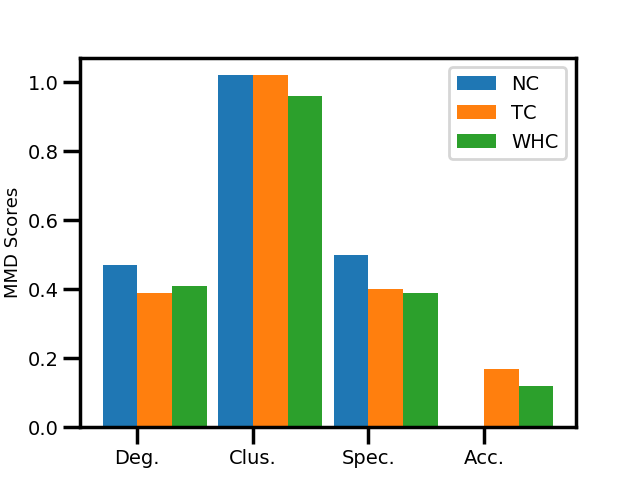}
    \vspace{-15pt}
    \caption{MMD scores for graph generation on Lobster graphs. Note, that $\mathcal{N}C$ achieves $0\%$ accuracy.}
    \label{fig:lobster_graph_gen}
    \vspace{-15pt}
\end{figure}

\section{Related Work}
\xhdr{Hyperbolic Geometry in Machine Learning:}
The intersection of hyperbolic geometry and machine learning has recently risen to prominence \cite{dhingra2018embedding,tay2018hyperbolic,law2019lorentzian,khrulkov2019hyperbolic,ovinnikov2019poincar}. Early prior work proposed to embed data into the Poincar\'e ball model \cite{nickel2017poincare,chamberlain2017neural}.
The equivalent Lorentz model was later shown to have better numerical stability properties \cite{nickel2018learning}, and recent work has leveraged even more stable tiling approaches \cite{yu2019numerically}. 
In addition, there exists a burgeoning literature of hyperbolic counterparts to conventional deep learning modules on Euclidean spaces (\textit{e.g.}, matrix multiplication), enabling the construction of hyperbolic neural networks (HNNs) \cite{gulcehre2018hyperbolic,ganea2018hyperbolic} with further extensions to graph data using hyperbolic GNN architectures \cite{liu2019graph,chami2019hyperbolic}.
Latent variable models on hyperbolic space have also been investigated in the context of VAEs, using  generalizations of the normal distribution \cite{nagano2019wrapped,mathieu2019continuous}.\cut{As an extension, \cite{skopek2019mixed} consider a more general case where the latent space of a VAE is a product of Riemannian manifolds with constant and learnable curvatures.} In contrast, our work learns a flexible approximate posterior using a novel normalizing flow designed to use the geometric structure of hyperbolic spaces.
In addition to work on hyperbolic VAEs, there are also several works that explore other non-Euclidean spaces (e.g., spherical VAEs) \cite{davidson2018hyperspherical,falorsi2019reparameterizing,grattarola2019adversarial}.

\xhdr{Learning Implicit Distributions}
In contrast with exact likelihood methods there is growing interest in learning implicit distributions for generative modelling. Popular approaches include density ratio estimation methods using a parametric classifiers such as GANS \cite{goodfellow2014generative}, and kernel based estimators \cite{shi2017kernel}. In the context of autoencoders learning implicit latent distribution can be seen as an adversarial game minimizing a specific divergence \cite{makhzani2015adversarial} or distance \cite{tolstikhin2017wasserstein}. Instead of adversarial formulations implicit distributions may also be learned directly by estimating the gradients of log density function using the Stein gradient estimator \cite{li2017gradient}. Finally, such gradient estimators can also be used to power variational inference with implicit posteriors enabling the use of posterior families with intractable densities  \cite{shi2018spectral}.

\xhdr{Normalizing Flows:} 
Normalizing flows (NFs)~\cite{rezende2015variational,dinh2016density} are a class of probabilistic models which use invertible transformations to map samples from a simple base distribution to samples from a more complex learned distribution.\cut{The new density can then be efficiently computed via the change of variable formula making normalizing flow powerful tools for variational inference and generative modeling.} While there are many classes of normalizing flows \cite{papamakarios2019normalizing,kobyzev2019normalizing}, our work largely follows normalizing flows designed with partially-ordered dependencies, as found in affine coupling transformations \cite{dinh2016density}. 
Recently, normalizing flows have also been extended to Riemannian manifolds, such as spherical spaces in \citet{gemici2016normalizing}. In parallel to this work, normalizing flows have been extended to toriodal spaces \cite{rezende2020normalizing} and the data manifold \cite{brehmer2020flows}. Finally, relying on affine coupling and GNNs, \citet{liu2019graph} develop graph normalizing flows (GNFs) for generating graphs. However, unlike our approach GNFs do not benefit from the rich geometry of hyperbolic spaces. 

\section{Conclusion}
In this paper, we introduce two novel normalizing flows on hyperbolic spaces. 
We show that our flows are efficient to sample from, easy to invert and require only $\mathcal{O}(n)$ cost to compute the change in volume. We demonstrate the effectiveness of constructing hyperbolic normalizing flows for latent variable modeling of hierarchical data. We empirically observe improvements in structured density estimation, graph reconstruction and also generative modeling of tree-structured data, with large qualitative improvements in generated sample quality compared to Euclidean methods. One important limitation is in the numerical error introduced by clamping operations which prevent the creation of deep flow architectures. We hypothesize that this is an inherent limitation of the Lorentz model, which may be alleviated with newer models of hyperbolic geometry that use integer-based tiling \cite{yu2019numerically}. In addition, while we considered hyperbolic generalizations of the coupling transforms to define our normalizing flows, designing new classes of invertible transformations like autoregressive and residual flows on non-Euclidean spaces is an interesting direction for future work.

\subsection*{Acknowledgements}
 Funding: AJB is supported by an IVADO Excellence Fellowship. RL was supported by Connaught International Scholarship and RBC Fellowship. WLH is supported by a Canada CIFAR AI Chair. This work was also supported by NSERC Discovery Grants held by WLH and PP. In addition the authors would like to thank Chinwei Huang, Maxime Wabartha, Andre Cianflone and Andrea Madotto for helpful feedback on earlier drafts of this work and Kevin Luk, Laurent Dinh and Niky Kamran for helpful technical discussions. The authors would also like to thank the anonymous reviewers for their comments and feedback and Aaron Lou, Derek Lim, and Leo Huang for catching a bug in the code.

\bibliography{bibliography.bib}
\bibliographystyle{icml2020}

\clearpage
\appendix
\onecolumn
\section{Background on Riemannian Geometry}
\label{riemannian_geometry_appendix}
An $n$-dimensional manifold is a topological space that is equipped with a family of open sets $U_i$ which cover the space and a family of functions $\psi_i$ that are homeomorphisms between the $U_i$ and open subsets of $\mathbb{R}$.  The pairs $(U_i,\psi_i)$ are called \emph{charts}.  A crucial requirement is that if two open sets $U_i$ and $U_j$ intersect in a region, call it $U_{ij}$, then the composite map $\psi_i\circ \psi_j^{-1}$ restricted to $U_{ij}$ is infinitely differentiable. If $\mathcal{M}$ is an $n-$dimensional manifold then a chart, $\psi: U \to V$, on $\mathcal{M}$ maps an open subset $U$ to an open subset $V \subset \mathbb{R}^n$. Furthermore, the image of the point $p \in U$, denoted $\psi(p) : \mathbb{R}^n$ is termed the local coordinates of $p$ on the chart $\psi$. Examples of manifolds include $\mathbb{R}^n$, the Hypersphere $\mathbb{S}^n$, the Hyperboloid $\mathbb{H}^n$, a torus. In this paper we take an extrinsic view of the geometry, that is to say a manifold can be thought of as being embedded in a higher dimensional Euclidean space, ---i.e. $\mathcal{M}^n \subset \mathbb{R}^{n+1}$, and inherits the coordinate system of the ambient space. This is not how the subject is usually developed but for spaces of constant curvature one gets convenient formulas.

\xhdr{Tangent Spaces} Let $p \in \mathcal{M}$ be a point on an $n-$dimensional smooth manifold and let $\gamma(t) \to \mathcal{M}$ be a differentiable parametric curve with parameter $t \in [-\epsilon, \epsilon]$ passing through the point such that $\gamma(0) = p$. Since $\mathcal{M}$ is a smooth manifold we can trace the curve in local coordinates via a chart $\psi$ and the entire curve is given in local coordinates by $x = \psi \circ \gamma(t)$. The tangent vector to this curve at $p$ is then simply $v = (\psi \circ \gamma)'(0)$. Another interpretation of the tangent vector of $\gamma$ is by interpreting the point $p$ as a position vector and the tangent vector is then interpreted as the velocity vector at that point.\cut{Stated another way the tangent vector is a vector of partial derivatives for every coordinate.} Using this definition the set of all tangent vectors at $p$ is denoted as $\mathcal{T}_p{\mathcal{M}}$, and is called the tangent space at $p$. 

\xhdr{Riemannian Manifold}
A Riemannian metric tensor $g$ on a smooth manifold $\mathcal{M}$ is defined as a family of inner products such that at each point $p \in \mathcal{M}$ the inner product takes vectors from the tangent space at $p$, $g_p= \langle \cdot, \cdot \rangle_p: \mathcal{T}_p\mathcal{M} \times \mathcal{T}_p\mathcal{M} \to \mathbb{R}$. This means $g$ is defined for every point on $\mathcal{M}$ and varies smoothly. Locally, $g$ can be defined using the basis vectors of the tangent space $g_{ij}(p) = g(\frac{\partial}{\partial p_i}, \frac{\partial}{\partial p_j})$. In matrix form the Riemannian metric, $G(p)$, can be expressed as, $\forall u,v \in  \mathcal{T}_p\mathcal{M} \times \mathcal{T}_p\mathcal{M}, \langle u, v \rangle_p = g(p)(u,v) = u^T G(p) v$. A smooth manifold manifold $\mathcal{M}$ which is equipped with a Riemannian metric at every point $p \in \mathcal{M}$ is called a Riemannian manifold. Thus every Riemannian manifold is specified as the tuple $(\mathcal{M},g)$ which define the smooth manifold and its associated Riemannian metric tensor.

Armed with a Riemannian manifold we can now recover some conventional geometric insights such as the length of a parametric curve $\gamma$, the distance between two points on the manifold, local notion of angle, surface area and volume. We define the length of a curve, $L[\gamma] = \int_a^b g_{\gamma (t)} || \gamma '(t)|| dt$. This definition is very similar to the length of a curve on Euclidean spaces if we just observe that the Riemannian metric is $I_n$. Now turning to the distance between points $p$ and $q$ we can reason that it must be the smallest or distance minimizing parametric curve between the points which in the literature are known as \textit{geodesics}\footnote{Actually a geodesic is usually defined as a curve such that the tangent vector is parallel transported along it. It is then a theorem that it gives the shortest path.}. Stated another way: $d(p,q) = \inf \big\{ L[\gamma] \ | \gamma:[a,b] \to \mathcal{M}\big\}$ with ,  $\gamma(a)=p$ and $\gamma(b)=q$. A norm is induced on every tangent space by $g_p$ and is defined as $\mathcal{T}_p\mathcal{M}: || \cdot ||_p :\sqrt{\langle \cdot, \cdot \rangle_p}$. Finally, we can also define an infitisimal volume element on each tangent space and as a result measure $d\mathcal{M}(p) = \sqrt{|G(p)|} dp$, with $dp$ being the Lebesgue measure. 

\section{Background Normalizing Flows}
\label{normalizing_flow_appendix}
Given a parametrized density on $\mathbb{R}^n$ a \textit{normalizing flow} defines a sequence of invertible transformations to a more complex density over the same space via the change of variable formula for probability distributions \cite{rezende2015variational}. Starting from a sample from a base distribution, $z_0 \sim p(z)$, a mapping $f: \mathbb{R}^d \to \mathbb{R}^d$, with parameters $\theta$ that is both invertible and smooth, the log density of $z' = f(z_0)$ is defined as $\log p_{\theta}(z') = \log p(z_0) - \log \det \Big \lvert \frac{\partial f}{\partial z} \Big \rvert$.
\cut{
\begin{align}
    \log p_{\theta}(z') = \log p(z_0) - \log \det \Big \lvert \frac{\partial f}{\partial z} \Big \rvert.
    \label{flow_eqn_1}

\end{align}
}
Where, $p_{\theta}(z')$ is the probability of the transformed sample and $\partial f / \partial x$ is the Jacobian of $f$. To construct arbitrarily complex densities a chain of functions of the same form as $f$ can be defined and through successive application of change of density for each invertible transformation in the flow. Thus the final sample from a flow is then given by $z_j = f_j \circ f_{j-1} ... \circ f_1(z_0)$ and it's corresponding density can be determined simply by $\ln p_{\theta}(z_j) = \ln p(z_0) - \sum_{i=1}^j\ln \det \Big \lvert \frac{\partial f_i}{\partial z_{i-1}} \Big \rvert$. 
Of practical importance when designing normalizing flows is the cost associated with computed the log determinant of the Jacobian which is computationally expensive and can range anywhere from $O(n!)-O(n^3)$ for an arbitrary matrix and a chosen algorithm. However, through an appropriate choice of $f$ this computation cost can be brought down significantly. While there are many different choices for the transformation function, $f$, in this work we consider only RealNVP based flows as presented in \cite{dinh2016density} and \cite{rezende2015variational} due to their simplicity and expressive power in capturing complex data distributions.

\subsection{Variational Inference with Normalizing Flows}
One obvious use case for Normalizing Flows is in learning a more expressive often multi-modal posterior distribution needed in Variational Inference. Recall that a variational approximation is a lower bound to the data log-likelihood. Take for example amortized variational inference in a VAE like setting whereby the posterior $q_{\theta}$ is parameterized and is amenable to gradient based optimization. The overall objective with both encoder and decoder networks:
\begin{align}
    \log p(x) & = \log \int p(x|z)p(z) dz \\
              & \geq \mathbb{E}_{q_{\theta}(z|x)}[\log \frac{p(x,z)}{q_{\theta}(z|x)}] \ \ \ \  (\textnormal{Jensen's Inequality}) \\
              & = \mathbb{E}_{q_{\theta}(z|x)} [\log p(x|z)] + \mathbb{E}_{q_{\theta}(z|x)}\Big[\log \frac{p(z)}{q_{\theta}(z|x)}\Big] \\
              & = \mathbb{E}_{q_{\theta}(z|x)} [\log p(x|z)] - D_{KL}(q_{\theta}(z|x) || p(z)) \label{nn_vae_perspective}
\end{align}
The tightness of the Evidence Lower Bound (ELBO) also known as the negative free energy of the system, $-\mathcal{F}(x)$, is determined by the quality of the posterior approximation to the true posterior. Thus, one way to enrich the posterior approximation is by letting $q_{\theta}$ be a normalizing flow itself and the resultant latent code be the output of the transformation. If we denote $q_0(z_0)$ the probability of the latent code $z_0$ under the base distribution and $z_k$ as the latent code after $K$ flow layers we may rewrite the Free Energy as follows:

\begin{align}
    \mathcal{F}(x) &= \mathbb{E}_{q_{0}(z_0)}[\log q_k(z_j) - \log p(x,z_j)] \\
    &= \mathbb{E}_{q_{0}(z_0)}\Big[\log q_0(z_0) - \sum_{i=1}^j\ln \det \Big \lvert \frac{\partial f_i}{\partial z_{i-1}} \Big \rvert - \log p(x,z_i)\Big]\\
    &= D_{KL}(q_0(z_0)|| p(z_j)) - \mathbb{E}_{q_{0}(z_0)}\Big[\sum_{i=1}^j\ln \det \Big \lvert \frac{\partial f_i}{\partial z_{i-1}} \Big \rvert - \log p(x|z_i)\Big]
\end{align}
For convenience we may take $q_0 = \mathcal{N}(\mu, \sigma^2)$ which is a reparametrized gaussian density and $p(z) = \mathcal{N}(0, I)$ a standard normal. 

\subsection{Euclidean RealNVP}
\label{Euclidean_RealNVP_appendix}
Computing the Jacobian of functions with high-dimensional domain and codomain and computing the determinants of large matrices are in general computationally very expensive. Further complications can arise with the restriction to bijective functions make for difficult modelling of arbitrary distributions. A simple way to significantly reduce the computational burden is to design transformations such that the Jacobian matrix is triangular resulting in a determinant which is simply the product of the diagonal elements. In \cite{dinh2016density}, real valued non-volume preserving (RealNVP) transformations are introduced as simple bijections that can be stacked but yet retain the property of having the composition of transformations having a triangular determinant. To achieve this each bijection updates a part of the input vector using a function that is simple to invert,
but which depends on the remainder of the input vector in a complex way. Such transformations are denoted  as affine coupling layers. Formally, given a $D$ dimensional input $x$ and $d < D$, the output $y$ of an affine coupling layer follows the equations:
\begin{align}
    y_{1:d} & = x_{1:d} \\
    y_{d+1:D} & = x_{d+1:D} \odot \textnormal{exp}(s(x_{1:d})) + t(x_{1:d}).
\end{align}
Where, $s$ and $t$ are parameterized scale and translation functions. As the second part of the input depends on the first, it is easy to see that the Jacobian given by this transformation is lower triangular. Similarly, the inverse of this transformation is given by:
\begin{align}
     x_{1:d} & = y_{1:d} \\
    x_{d+1:D} & = (y_{d+1:D} - t(y_{1:d})\odot \textnormal{exp}(-s(y_{1:d})).
\end{align}
Note that the form of the inverse does not depend on calculating the inverses of either $s$ or $t$ allowing them to be complex functions themselves. Further note that with this simple bijection part of the input vector is never touched which can limit the expressiveness of the model. A simple remedy to this is to simply reverse the elements that undergo scale and translation transformations prior to the next coupling layer. Such an alternating pattern ensures that each dimension of the input vector depends in a complex way given a stack of couplings allowing for more expressive models. Finally, the Jacobian of this transformation is a lower triangular matrix,
\begin{equation} 
    \frac{\partial y}{\partial x} = \begin{bmatrix}
                                    \mathbb{I}_{d} & 0 \\
                                
                                   \frac{\partial y_{d+1:D}}{x^T_{1:d}} & \textnormal{diag}(\textnormal{exp}s(x_{1:d}))
                                    \end{bmatrix}.
\end{equation}

\section{Change of Variable for Tangent Coupling}
We now derive the change in volume formula associated with one $\mathcal{T}C$ layer. Without loss of generality we first define a binary mask which we use to partition the elements of a vector at $\mathcal{T}_{\textbf{o}}\mathbb{H}^n_K$ into two sets. Thus $b$ is defined as 
$$ 
b_j = 
\begin{cases}
1 &\textrm{if $j\leq d$}\\
0 &\textrm{otherwise},
\end{cases}
$$
Note that all $\mathcal{T}C$ layer operations exclude the first dimension which is always copied over by setting $b_0 = 1$ and ensures that the resulting sample always remains on $\mathcal{T}_{\textbf{o}}{\mathbb{H}^n_K}$. Utilizing $b$ we may rewrite Equation \ref{eq:tangent_coupling} as,

\label{tangent_coupling_proof_appendix}
\begin{equation}
    \textbf{y} =  \textnormal{exp}^K_{\textbf{o}}\big(b \odot \tilde{x} + (1 - b)\odot(\tilde{x} \odot \sigma(s(b \odot \tilde{x})) + t(b \odot \tilde{x}))\big),
\end{equation}
where $\tilde{x} = \log^K_{\textbf{o}}(x)$ is a point on the tangent space at $\textbf{o}$. Similar to the Euclidean RealNVP, we wish to calculate the jacobian determinant of this overall transformation. We do so by first observing that the overall transformation is a valid composition of functions: $y := \textnormal{exp}^K_{\textbf{o}} \circ f\circ \log^K_{\textbf{o}}(\textbf{x})$, where $z = f(\tilde{x})$ is the flow in tangent space. Utilizing the chain rule and the identity that the determinant of a product is the product of the determinants of its constituents we may decompose the jacobian determinant as,

\begin{equation}
    \textnormal{det}\Big (\frac{\partial \textbf{y}}{\partial \textbf{x}}\Big) =  \textnormal{det}\Big (\frac{\partial \textnormal{exp}^K_{\textbf{o}}(z)}{\partial z}\Big) \cdot \textnormal{det}\Big (\frac{\partial f(\tilde{x})}{\partial \tilde{x}}\Big) \cdot  \textnormal{det}\Big (\frac{\partial \log^K_{\textbf{o}}(\textbf{x})}{\partial \textbf{x}}\Big)
    \label{jac_det_tangent_flow_1}.
\end{equation}
Tackling each term on RHS of Eq. \ref{jac_det_tangent_flow_1} individually, $ \textnormal{det}\Big (\frac{\partial \textnormal{exp}^K_{\textbf{o}}(z)}{\partial z}\Big) = \Big(\frac{R\sinh(\frac{||z||_{\mathcal{L}}}{R})}{||z||_{\mathcal{L}}}\Big)^{n-1}$ as derived in \cite{nagano2019wrapped}. As the logarithmic map is the inverse of the exponential map the jacobian determinant is also the inverse ---i.e.  $\textnormal{det}\Big (\frac{\partial \log^K_{\textbf{o}}(\textbf{x})}{\partial \textbf{x}}\Big) = \Big(\frac{\sinh(||\log^K_{\textbf{o}}(\textbf{x})||_{\mathcal{L}})}{||\log^K_{\textbf{o}}(\textbf{x})||_{\mathcal{L}}}\Big)^{1-n}$. For the middle term in Eq. \ref{jac_det_tangent_flow_1} we proceed by selecting the standard basis $\{e_1, e_2, ... e_n\}$ which is an orthonormal basis  with respect to the Lorentz inner product. The directional derivative with respect to a basis element $e_j$ is computed as follows:

\begin{align*}
    \textnormal{d}f(\tilde{x}) &= \frac{\partial}{\partial \epsilon} \Big |_{\epsilon=0} f(\tilde{x} + \epsilon e_j)\\
    & = \frac{\partial}{\partial \epsilon} \Big |_{\epsilon=0} \{b \odot (\tilde{x} + \epsilon e_j) + (1 - b)\odot((\tilde{x} + \epsilon e_j) \odot \sigma(s(b \odot (\tilde{x} + \epsilon e_j))) + t(b \odot (\tilde{x} + \epsilon e_j)))\} \\
    &= b \odot  e_j + \frac{\partial}{\partial \epsilon} \Big |_{\epsilon=0}\{ (1 - b)\odot((\tilde{x} + \epsilon e_j) \odot \sigma(s(b \odot (\tilde{x} + \epsilon e_j))) + t(b \odot (\tilde{x} + \epsilon e_j)))\} \\
\end{align*}
As $b \in [0,1]^n$ is a binary mask, it is easy to see that if $b_j =1$ then only the first term on the RHS remains and the directional derivative with respect to $e_j$ is simply the basis vector itself. Conversely, if $b_j =0$ then the first term goes to zero and we are left with the second term,

\begin{align*}
    \textnormal{d}f(\tilde{x}) &= \frac{\partial}{\partial \epsilon} \Big |_{\epsilon=0}\{ (1 - b)\odot((\tilde{x} + \epsilon e_j) \odot \sigma(s(b \odot (\tilde{x} + \epsilon e_j))) + t(b \odot (\tilde{x} + \epsilon e_j)))\} \\
    &=  \frac{\partial}{\partial \epsilon} \Big |_{\epsilon=0}\{ (1 - b)\odot((\tilde{x} + \epsilon e_j) \odot \sigma(s(b \odot \tilde{x})) + t(b \odot \tilde{x}))\} \\
    &= e_j \odot \sigma(s(b \odot \tilde{x})).
\end{align*}
Where in the second line we've used the fact $b \odot \epsilon e_j = 0$. All together, the directional derivatives computed using our chosen basis elements are,

\begin{equation*}
    \textnormal{d}f(\tilde{x}) = (e_1, e_2, \dots e_d, e_{d+1}\odot \sigma(s(b \odot \tilde{x})), \dots e_{D}\odot \sigma(s(b \odot \tilde{x}))).
\end{equation*}
The volume factor given by this linear map is $\textnormal{det}( \textnormal{d}f(\tilde{x})) = \sqrt{G^TG}$, where $G$ is the matrix of all directional derivatives. As the basis elements are orthogonal all non-diagonal entries of $G^TG$ go to zero and the determinant is the product of the Lorentz norms of each component. As $||e_j||_{\mathcal{L}} =1$ and $||e_{j}\odot \sigma(s(b \odot \tilde{x}))||_{\mathcal{L}} = ||e_{j}\odot \sigma(s(b \odot \tilde{x}))||_2$ for $\mathcal{T}_{\mathbf{o}}H^n_K$ the overall determinant is then $\textnormal{d}f(\tilde{x}) = \textnormal{diag} \ \sigma(s(b \odot \tilde{x}))$. Finally, the full log jacobian determinant of a $\mathcal{T}C$ layer is given by,

\begin{equation}
    \log \textnormal{det}\Big (\frac{\partial \textbf{y}}{\partial \textbf{x}}\Big) = \Big(\frac{R\sinh(\frac{||z||_{\mathcal{L}}}{R})}{||z||_{\mathcal{L}}}\Big)^{n-1} + \sum_{i=d+1}^n\sigma(s(\tilde{x}_1))_i 
      + \Big(\frac{R\sinh(\frac{||\log^K_{\textbf{o}}(\textbf{x})||_{\mathcal{L}}}{R})}{||\log^K_{\textbf{o}}(\textbf{x})||_{\mathcal{L}}}\Big)^{1-n}
\end{equation}

Thus the overall computational cost is only slightly larger than the regular Euclidean RealNVP, $\mathcal{O}(n)$.

\section{Change of Variable for Wrapped Hyperbolic Coupling}
\label{wrapped_coupling_proof_appendix}
We consider the following function $f : \mathbb{H}^n_K \rightarrow \mathbb{H}^n_K$, which we use to define a normalizing flow in $n$-dimensional hyperbolic space (represented via the Lorentz model): 
\begin{equation}\label{eq:mainflow}
    f(\mb{x}) = \textnormal{exp}^K_{\textbf{o}}\left( b \odot \tilde{
    x} + (1-b) \odot \log^K_{\textbf{o}}\Big( \textnormal{exp}^K_{t(b \odot \tilde{x})}\big(\textnormal{PT}^K_{\textbf{o}\to t(b \odot \tilde{x}) }((1-b) \odot \tilde{x} \odot \sigma(s(b \odot \tilde{x})))\big)\Big) \right),
\end{equation}

where $\tilde{x} = \log_\mb{o}(\mb{x}) \in \mc{T}_\mb{o}\mbb{H}^n_K$ is the projection of $\mb{x} \in \mbb{H}^n_K$ to the tangent space at the origin, i.e, $\mc{T}_\mb{o}\mbb{H}^n_K$. As in $\mathcal{T}C$ we again utilize a binary mask $b$ so that
$$ 
b_j = 
\begin{cases}
1 &\textrm{if $j\leq d$}\\
0 &\textrm{otherwise},
\end{cases}
$$
where $0 < d < n$.
In Equation \eqref{eq:mainflow} the function $s : \mc{T}_o\mbb{H}^d_K \rightarrow \mc{T}_o\mbb{H}^{n-d}_K$ is an arbitrary function on the tangent space at the origin and $\sigma$ denotes the logistic function.
The function $t : \mc{T}_o\mbb{H}^d_K \rightarrow \mbb{H}^*_K \subset {H}^n_K$ is a map from the tangent space at the origin to a subset of hyperbolic space defined by the set of points satisfying the condition that $\mb{v}_i = 0, \forall i=2...d, \mb{v}_i \in \mbb{H}^n_K$ (under their representation in the Lorentz model).

Our goal is to derive the Jacobian determinant of $f$, i.e., 
\begin{equation}
    \left|\textrm{det}\left(\frac{\partial f(\mb{x})}{\partial \mb{x}}\right)\right|,
\end{equation}
To do so, we will use the following facts without proof or justification:
\begin{itemize}
    \item 
    \textbf{Fact 1: }The chain rule for determinants, i.e., the fact that
    \begin{equation}
        \left|\textrm{det}\left(\frac{\partial f(\mb{x})}{\partial \mb{x}}\right)\right| =  \left|\textrm{det}\left(\frac{\partial f(\mb{x})}{\partial \mb{v}}\right)\right| \left|\textrm{det}\left(\frac{\partial \mb{v}}{\partial \mb{x}}\right)\right|,
    \end{equation}
    where $\mb{v}$ is introduced via a valid change of variables. 
    \item 
    \textbf{Fact 2: } The Jacobian determinant for the exponential map $\exp^K_\mb{u}(z) = \mc{T}_{u}\mbb{H}^n_K \rightarrow \mbb{H}^n_K$ is given by
    \begin{equation}
        \left|\textrm{det}(\exp^K_\mb{u}(z))\right| =
        \left(\frac{R\sinh(\frac{||z||_{\mathcal{L}}}{R})}{||z||_{\mathcal{L}}}\right)^{n-1}
    \end{equation}
        \item 
    \textbf{Fact 3: } The Jacobian determinant for the logarithmic map $\log^K_\mb{u}(\mb{v}) = \mbb{H}^n_K \rightarrow \mc{T}_u\mbb{H}^n_K$ is given by
    \begin{equation}
        \left|\textrm{det}(\log^K_\mb{u}(\mb{v})\right| =
        \left(\frac{R\sinh(\frac{||\log^K_{\textbf{o}}(\textbf{v})||_{\mathcal{L}}}{R})}{||\log^K_{\textbf{o}}(\textbf{v})||_{\mathcal{L}}}\right)^{1-n}
    \end{equation}
    \item
     \textbf{Fact 4: } The Jacobian determinant for parallel transport $\textnormal{PT}^K_{\mb{u} \rightarrow \mb{t}}(v) = \mc{T}_\mb{u}\mbb{H}^n_K \rightarrow \mc{T}_t\mbb{H}^n_K$ is given by
    \begin{equation}
        \left|\textrm{det}\left(\textnormal{PT}^K_{\mb{u} \rightarrow \mb{t}}(v)\right)\right| = 1.
    \end{equation}
\end{itemize}
Fact 2 and Fact 4 are proven in \citet{nagano2019wrapped} ``A Wrapped Normal Distribution on Hyperbolic Space for Gradient-Based Learning'' for $K=-1$ and rederived for general $K$ in \citet{skopek2019mixed}. Fact 3 follows from the fact that the determinant of the inverse of a function is the inverse of that function's determinant. 
We will use similar arguments to obtain our determinant as were used in \citet{nagano2019wrapped}, and we refer the reader to Appendix A.3 in their work for background. 

Our main claim is as follows
\begin{prop}
The Jacobian determinant of the function $\tilde{f}^{\mathcal{W}\mathbb{H}C}$ in \eqref{wrapped_hyperboloid_coupling_eqn} is:
\begin{multline}
  \left|\textnormal{det}\left(\frac{\partial\mb{y}}{\partial \mb{x}}\right)\right| = \prod_{i=d+1}^n\sigma(s(\tilde{x}_1))_i \times \Big(\frac{R \sinh(\frac{||q||_{\mathcal{L}}}{R})}{||q||_{\mathcal{L}}}\Big)^{l}
  \times \Big(\frac{R\sinh(\frac{||\log_{\textbf{o}}^K(\hat{\textbf{q}})||_{\mathcal{L}}}{R})}{||\log_{\textbf{o}}^K(q)||_{\mathcal{L}}}\Big)^{-l} \ \times  \Big(\frac{R\sinh(\frac{||\tilde{z}||_{\mathcal{L}}}{R})}{||\tilde{z}||_{\mathcal{L}}}\Big)^{n-1}
 \  \\ \times \Big(\frac{R\sinh(\frac{||\log_{\textbf{o}}^K(\textbf{x})||_{\mathcal{L}}}{R})}{||\log_{\textbf{o}}^K(\textbf{x})||_{\mathcal{L}}}\Big)^{1-n},
\end{multline}
where 
$$
z =  b \odot \tilde{
    x} + \log^K_{\textbf{o}}\Big( \textnormal{exp}^K_{t(b \odot \tilde{x})}\big(\textnormal{PT}^K_{\textbf{o}\to t(b \odot \tilde{x}) }((1-b) \odot \tilde{x} \odot \sigma(s(b \odot \tilde{x})))\big)\Big)
$$
the argument to the parallel transport $q$ is,
$$
 q = \textnormal{PT}^K_{\textbf{o}\to t(b \odot \tilde{x}) }((1-b) \odot \tilde{x} \odot \sigma(s(b \odot \tilde{x}))). 
$$
and
$$
\hat{\textbf{q}} = \textnormal{exp}_{\textbf{t}}^K(q)
$$
\end{prop}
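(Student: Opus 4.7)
The plan is to peel the transformation into three layers using the chain rule for determinants and handle each layer with a result already available in the paper. Writing $\tilde{x} = \log_{\mb{o}}^K(\mb{x})$ and letting $z$ denote the tangent-space output (before the final $\exp_{\mb{o}}^K$), the full map factors as $\mb{y} = \exp_{\mb{o}}^K \circ\, \tilde f^{\mathcal{W}\mathbb{H}C} \circ \log_{\mb{o}}^K(\mb{x})$, so
$$\left|\textnormal{det}\frac{\partial \mb{y}}{\partial \mb{x}}\right| = \left|\textnormal{det}\frac{\partial \exp_{\mb{o}}^K(z)}{\partial z}\right|\cdot \left|\textnormal{det}\frac{\partial z}{\partial \tilde{x}}\right|\cdot \left|\textnormal{det}\frac{\partial \log_{\mb{o}}^K(\mb{x})}{\partial \mb{x}}\right|.$$
Facts 2 and 3 already give the outer factors $\bigl(R\sinh(\|\tilde z\|_{\mathcal{L}}/R)/\|\tilde z\|_{\mathcal{L}}\bigr)^{n-1}$ and $\bigl(R\sinh(\|\log_{\mb{o}}^K(\mb{x})\|_{\mathcal{L}}/R)/\|\log_{\mb{o}}^K(\mb{x})\|_{\mathcal{L}}\bigr)^{1-n}$. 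The rest of the proof is to compute the middle determinant $|\textnormal{det}(\partial z/\partial \tilde{x})|$.

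For the middle determinant I would invoke Lemma 1, which reduces the full $n\times n$ Jacobian on $\mc{T}_{\mb{o}}\mbb{H}^n_K$ to the $(n-d)\times(n-d)$ Jacobian of the sub-map $h^\ast$ acting only on the coordinates $d{+}1,\dots,n$. The intuition is structural: under the standard basis $e_1,\dots,e_n$ of $\mc{T}_{\mb{o}}\mbb{H}^n_K$, which is orthonormal with respect to $\langle\cdot,\cdot\rangle_{\mathcal{L}}$, the first $d$ components of $z$ coincide with $\tilde{x}_1$, giving an identity block, while the careful construction of $t$ (whose nonzero entries lie only in indices $0$ and $d{+}1,\dots,n$, with index $0$ determined by the others via \eqref{eqn:hyperboloid_projection}) guarantees that parallel transport and the subsequent $\exp/\log$ chain preserve the subspace spanned by $e_{d+1},\dots,e_n$. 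These two observations together make the Jacobian block triangular in the chosen basis, so its determinant equals that of the lower diagonal block, which is exactly $h^\ast$.

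Once Lemma 1 is in hand, the remaining work is routine. The sub-map $h^\ast$ is itself a composition
$$h^\ast(\tilde{x}_2) = \log_{\mb{o}_2}^K \circ \exp_{\mb{t}_2}^K \circ \textnormal{PT}^K_{\mb{o}_2\to \mb{t}_2} \circ \bigl(\tilde{x}_2 \mapsto \tilde{x}_2 \odot \sigma(s(\tilde{x}_1))\bigr),$$
with $\tilde{x}_1$ frozen inside $h^\ast$. Applying the chain rule again and Facts 2, 3, and 4, the parallel transport contributes $1$, the elementwise scaling gives a diagonal Jacobian with determinant $\prod_{i=d+1}^n \sigma(s(\tilde{x}_1))_i$, and the exp/log pair at $\mb{t}_2$ produces the two factors involving $\|q\|_{\mathcal{L}}$ and $\|\log_{\mb{o}}^K(\hat{\mb{q}})\|_{\mathcal{L}}$ with exponents $\pm l = \pm(n-d)$ coming from the dimension of the reduced subspace. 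Multiplying all five contributions yields the stated formula.

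The main obstacle is Lemma 1, i.e.\ rigorously verifying the block-triangular structure. This requires (i) checking that $\langle t(\tilde{x}_1), v\rangle_{\mathcal{L}}$ behaves correctly so that parallel transport of a vector living in $\textnormal{span}(e_{d+1},\dots,e_n)$ lands in $\mc{T}_{t(\tilde{x}_1)}\mbb{H}^n_K$ without leaking into the first $d$ directions, and (ii) confirming that the subsequent $\exp_{\mb{t}}^K$ followed by $\log_{\mb{o}}^K$ (with $\mb{t}$ supported only on the reserved indices) keeps the result in that same subspace, so that directional derivatives along $e_1,\dots,e_d$ produce only the identity block. Everything else in the proof is an accounting exercise once this structural reduction is justified.
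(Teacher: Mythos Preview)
Your proposal is correct and follows essentially the same route as the paper: the chain-rule factorization into the outer $\exp_{\mb{o}}^K$ and $\log_{\mb{o}}^K$ layers handled by Facts 2 and 3, the reduction of the middle Jacobian to $h^\ast$ via the block-triangular argument of Lemma 1, and then a second chain-rule pass on $h^\ast$ using Facts 2--4 together with the diagonal scaling. Your outline of why Lemma 1 holds (identity block on the first $d$ coordinates, and the design of $t$ confining the parallel-transport/exp/log composite to $\textnormal{span}(e_{d+1},\dots,e_n)$) is exactly the structural argument the paper gives.
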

\begin{proof}
We first note that 
\begin{equation}
\left|\textnormal{det}\left(\frac{\partial f(\mb{x})}{\partial \mb{x}}\right)\right| = \left|\textrm{det}\left(\frac{\partial f(\mb{x})}{\partial z}\right)\right| \times \left|\textrm{det}\left(\frac{\partial z}{\partial \tilde{x}}\right)\right| \times \left|\textrm{det}\left(\frac{\partial \tilde{x}}{\partial \mb{x}}\right)\right| 
\end{equation}
by the chain rule (recalling that $\tilde{x} = \log_\mb{o}(\mb{x})$).
Now, we have that 
\begin{equation}
\left|\textrm{det}\left(\frac{\partial f(\mb{x})}{\partial z}\right)\right| =
\Big(\frac{R\sinh(\frac{||z||_{\mathcal{L}}}{R})}{||z||_{\mathcal{L}}}\Big)^{n-1}
\end{equation}
by Fact 2.
And, 
\begin{equation}
\left|\textrm{det}\left(\frac{\partial \tilde{x}}{\partial \mb{x}}\right)\right| =
\Big(\frac{R\sinh(\frac{||\log^K_{\textbf{o}}(\textbf{x})||_{\mathcal{L}}}{R})}{||\log^K_{\textbf{o}}(\textbf{x})||_{\mathcal{L}}}\Big)^{1-n}
\end{equation}
by Fact 3.
Thus, we are left with the term
$$
\left|\textrm{det}\left(\frac{\partial z}{\partial \tilde{x}}\right)\right|.
$$
To evaluate this term, we rely on the following Lemma:

\begin{lemma}
Let $h : \mc{T}_\mb{o}\mbb{H}^n_K \rightarrow \mc{T}_\mb{o}\mbb{H}^n_K$ be a function from the tangent space at the origin to the tangent space at the origin defined as:
\begin{equation}\label{eq:g}
h(\tilde{x}) = z = b \odot \tilde{
    x} + \log^K_{\textbf{o}}\Big( \textnormal{exp}^K_{t(b \odot \tilde{x})}\big(\textnormal{PT}^K_{\textbf{o}\to t(b \odot \tilde{x}) }((1-b) \odot \tilde{x} \odot \sigma(s(b \odot \tilde{x})))\big)\Big).
\end{equation}
Now, define a function $h^* : \mc{T}_\mb{o}\mbb{H}^{n-d}_K \rightarrow \mc{T}_\mb{o}\mbb{H}^{n-d}_K$ which acts on the subspace of $\mc{T}_\mb{o}\mbb{H}^{n-d}_K$ corresponding to the standard basis elements $e_{d+1}, ..., e_n$ as
\begin{equation}\label{eq:gstar}
h^*(\tilde{x}_{d+1:n}) =   \log^K_{\mb{o}_{d+1:n}}\Big( \textnormal{exp}^K_{t_{d+1:n}}\big(\textnormal{PT}^K_{\mb{o}_{d+1:n} \to t_{d+1:n}}( \tilde{x}_{d+1:n} \odot \sigma(s))\big)\Big),
\end{equation}
where $\tilde{x}_{d+1:n}$ denotes the portion of the vector $\tilde{x}$ corresponding to the standard basis elements $e_{d+1}, ..., e_n$ and $s$ and $t$ are constants (which depend on $\tilde{x}_{2:d}$).
In \eqref{eq:gstar}, we use $\mb{o}_{d+1:n} \in \mbb{H}^{n-d}_K$ to denote the vector corresponding to only the dimensions $d+1, ..., n$ and similarily for $t_{d+1:n}$.
Then we have that
\begin{equation}
    \left|\textnormal{det}\left(\frac{\partial z}{\partial \tilde{x}}\right)\right| =    \left|\textnormal{det}\left(\frac{\partial h^*(\tilde{x}_{d+1:n})}{\partial \tilde{x}_{d+1:n})}\right)\right|.
\end{equation}
\end{lemma}
\begin{proof}
First note that by design we have that 
\begin{equation}
    [0,0..,0] \oplus h^*(\tilde{x}_{d+1:n}) = \log^K_{\textbf{o}}\Big( \textnormal{exp}^K_{t(b \odot \tilde{x})}\big(\textnormal{PT}^K_{\textbf{o}\to t(b \odot \tilde{x}) }((1-b) \odot \tilde{x} \odot \sigma(s(b \odot \tilde{x})))\big)\Big),
\end{equation}
i.e., the output of $h^*$ is equal to right hand side of Equation \eqref{eq:g} after prepending/concatenating 0s to the output of $h^*$. 

Now, we can evaluate 
$$
\left|\textrm{det}\left(\frac{\partial z}{\partial \tilde{x}}\right)\right|
$$
by examining the directional derivative with respect to a set of basis elements of $\mc{T}_\mb{o}\mbb{H}^{n}_K$.
Now, given that this is the tangent space at the origin, we know that the standard (i.e., Euclidean) basis elements $e_2, ..., e_n$ form a valid basis for this subspace, since they are orthogonal under the Lorentz normal and orthogonal to the origin itself. 
Now, we can note first that
\begin{equation}
    D_{e_i}h(\tilde{x}) = e_i, \forall i=2...d.
\end{equation}
In other words, the directional derivative for the first $d$ basis elements is the simply the basis elements themselves. This can be verified by taking the definition of the directional derivative:
\begin{equation}
     D_{e_i}h(\tilde{x})  = \frac{\partial}{\partial \epsilon}\Big|_{\epsilon = 0}h(\tilde{x} + \epsilon e_i)
\end{equation}
and noting that the 
$$
\log^K_{\textbf{o}}\Big( \textnormal{exp}^K_{t(b \odot \tilde{x})}\big(\textnormal{PT}^K_{\textbf{o}\to t(b \odot \tilde{x}) }((1-b) \odot \tilde{x} \odot \sigma(s(b \odot \tilde{x})))\big)\Big)
$$
term must equal zero since $(1-b)\odot e_i = 0, \forall i = 2, ..., d$ by design. 
Now, for the basis elements $e_i$ with $i>d$ we have that
\begin{equation}
     D_{e_i}h(\tilde{x}) \perp e_j, \forall i=d+1, ..., n, j=2...,d.
\end{equation}
This holds because 
\begin{align}
D_{e_i}h(\tilde{x})  &= \frac{\partial}{\partial \epsilon}\Big|_{\epsilon = 0}h(\tilde{x} + \epsilon e_i) \\
 &= \frac{\partial}{\partial \epsilon}\Big|_{\epsilon = 0}    \log^K_{\textbf{o}}\Big( \textnormal{exp}^K_{t(b \odot \tilde{x})}\big(\textnormal{PT}^K_{\textbf{o}\to t(b \odot \tilde{x}) }((1-b) \odot \tilde{x} \odot \sigma(s(b \odot \tilde{x})))\big)\Big) 
\end{align}
since $b \odot e_i = 0, \forall i = d+1, ..., n$ by design and because
\begin{equation}
   \log^K_{\textbf{o}}\Big( \textnormal{exp}^K_{t(b \odot \tilde{x})}\big(\textnormal{PT}^K_{\textbf{o}\to t(b \odot \tilde{x}) }((1-b) \odot \tilde{x} \odot \sigma(s(b \odot \tilde{x})))\big)\Big) \perp e_j, \forall \tilde{x} \in \mc{T}_\mb{o}\mbb{H}^n_K, \forall  j=2...,d.
\end{equation}
due to the $(1-b)$ term inside the parallel transport and by our design of the function $t$. 
Together, these facts give that the Jacobian matrix for $h$ (under the basis $e_2, ..., e_n$) has the following block form:
\begin{equation}
   \left(\frac{\partial z}{\partial \tilde{x}}\right) = \left[\begin{BMAT}{cc}{cc}
    I & \mb{0} \\
    A & \frac{\partial h^*(\tilde{x}_{d+1:n})}{\partial \tilde{x}_{d+1:n})}
    \end{BMAT}\right]
\end{equation}
and by the properties of determinants of block matrices we have that
\begin{equation}
    \left|\textnormal{det}\left(\frac{\partial z}{\partial \tilde{x}}\right)\right| =    \left|\textnormal{det}\left(\frac{\partial h^*(\tilde{x}_{d+1:n})}{\partial \tilde{x}_{d+1:n})}\right)\right|
\end{equation}
\end{proof}
Given Lemma 1, all that remains is to evaluate 
\begin{equation}
    \left|\textnormal{det}\left(\frac{\partial h^*(\tilde{x}_{d+1:n})}{\partial \tilde{x}_{d+1:n})}\right)\right|.
\end{equation}
This can again be done by the chain rule, where we use Facts 2-4 to compute the determinant for exponential map, logarithmic map, and parallel transport.
Finally, the Jacobian determinant for the term
\begin{equation}
    \tilde{x} \odot \sigma(s(b \odot \tilde{x})))
\end{equation}
can easily be computed as $\prod_{j=d+1}^n\sigma(s(b \odot \tilde{x}))_j$ since the standard Euclidean basis is a basis for the tangent space at the origin as shown in Appendix \ref{Euclidean_RealNVP_appendix}.
\end{proof}

\cut{

\section{Decoders}
\noindent
\textbf{Issue:} Our VGAE models are trained using a cross-entropy loss over the predictions
\begin{equation}
    \hat{P}(A[i,j] = 1 | \Theta),
\end{equation}
where $\Theta$ is our encoder/decoder model parameters. However, many of our decoders are not giving proper probabilities, 
for example taking a sigmoid of distances between node latents
\begin{equation}
    \hat{P}(A[i,j] = 1 | \Theta) = \sigma(-d(\mb{z}_i, \mb{z}_j))
\end{equation}
is improper since $\sigma(-d(\mb{z}_i, \mb{z}_j)) \in [0,0.5]$ and is not surjective onto $[0,1]$. (The root cause is the fact that $d(\mb{z}_i, \mb{z}_j) \geq 0$.)
{\bf Note that this is an issue at both training and inference, as our cross-entropy loss during training will also struggle to learn when using improper probability inputs.}

Note that the dot-product decoder does not have this problem because 
\begin{equation}
  \hat{P}(A[i,j] = 1 | \Theta) = \sigma(\mb{z}_i^\top \mb{z}_j) 
\end{equation}
is surjective on $[0,1]$ since $\mb{z}_i^\top \mb{z}_j\in [-\inf, \inf]$.
{\bf However, we want to use a distance-based decoder, because that allows us to leverage the structure of the hyperbolic space in the most proper/elegant/intuitive way.}\\

\noindent
\textbf{Direct Solution} We use a well-defined softmax. The most direct solution to give proper probabilties for edges within a single graph based on distances would be:
\begin{equation}
    \hat{P}(A[i,j] = 1 | \Theta) = \sigma(p)\times\frac{\exp(d(\mb{z}_i, \mb{z}_j))}{\sum_{k=1}^{n}\sum_{l=l+1}^{n}\exp(d(\mb{z}_k, \mb{z}_l))} 
\end{equation}
where $p \in \mathbb{R}$ is a learnable parameter that controls the density of the graph.  
Note that here we are normalizing over all possible edges in the graph. 
Thus, a pair of nodes that are close together {\em relative to the average pair of nodes in the graph}, will tend to be connected. 
The $\sigma(p)$ term controls the density of the graph. 
{\bf This equation would be perfect if we were only learning a model over a single graph... However, if we are training over a distribution of multiple graphs that might have different numbers of nodes then the probabilities will all be lower in larger graphs, which is no good.}
In particular, if all the nodes are equidistant then we will get lower probabilities for a graph with more nodes...
A natural way to fix this is as follows:
\begin{equation}
    \hat{P}^{\textrm{fixed}}(A[i,j] = 1 | \Theta) = \sigma(p)\times\frac{\exp(d(\mb{z}_i, \mb{z}_j))}{\sum_{k=1}^{n}\sum_{l=l+1}^{n}\exp(d(\mb{z}_k, \mb{z}_l))} \times \left(\max_{i',j'}\left\{\frac{\exp(d(\mb{z}_{i'}, \mb{z}_{j'}))}{\sum_{k=1}^{n}\sum_{l=l+1}^{n}\exp(d(\mb{z}_k, \mb{z}_l))}\right\}\right)^{-1}
\end{equation}
That is, we normalize the softmax score by the inverse of the maximum softmax value for a graph.
This ensures that the decoder simplifies to the Erdos-Renyi model (with edge probability $\sigma(p)$) when all pairwise distances are equal (in expectation), regardless of how many nodes are in the graph.
Note that the $\sigma(p)$ term is important since it is what allows the model to learn the density of the graph.

}

\section{Model Architectures and Hyperparameters}
\label{model_arch_and_hyperparams}
In this section we provide more details regarding model architectures and hyperparameters for each experiment in \ref{experiments}. For all hyperbolic models we used a curvature warmup for $10$ epochs which aids in numerical stability \citet{skopek2019mixed}. Specifically, we set $R=11$ and linearly decrease to $R=2$ every epoch after which it is treated as a learnable parameter. 

\xhdr{Structured Density Estimation}
For all VAE models our encoder consists of three linear layers. The first layer maps the input to a hidden space and the other two layers are used to paramaterize the mean and variance of the prior distribution and map samples to the latent space. The decoder for these models is simply a small MLP that consists of two linear layers that map the latent space to the hidden space and then finally back to the observation space. One important distinction between Euclidean models and hyperbolic models is that we use aFor BDP the hidden dim size is $200$ while for MNIST we use $600$ and the latent space is varied as shown in Tables \ref{table:bdp_table} and \ref{table:mnist_table}. All flow models used in this setting consist of $2$ linear layers each of size $128$. Between each layer in either the encoder and decoder we use the LeakyRelu \cite{xu2015empirical} activation function while tanh is used between flow layers. Lastly, we train all models for $80$ epochs with the Adam optimizer with default setting \cite{kingma2014adam}.

\xhdr{Graph Reconstruction}
For graph reconstruction task we use the VGAE model as a base \cite{kipf2016variational} which also uses three linear layers of size $16$ as the encoder in the VAE model. The decoder however is parameter less and is simply an inner product either in Euclidean space or in $\mathcal{T}_{\textbf{o}}\mathbb{H}^n_K$ for Hyperbolic models. As the reconstruction objective contains $N^2$ terms we rescale the $\mathbb{D}_{KL}$ penalty by a factor of $1/N$ such that each of the losses are on the same scale. This can be understood as a $\beta-$VAE like model where $\beta = 1/N$. Like the structured density estimation setting all our flow models consist of two linear layers of size $128$ with a tanh nonlinearity. Finally, we train the each model for $3000$ epochs using the Adam optimizer \cite{kingma2014adam}. 

\xhdr{Graph Generation}
For the graph generation task we adapt the training setup from \cite{liu2019graph} in that we pretrain a graph autoencoder for $100$ epochs to generate node latents. Empirically, we found that using a VGAE model for hyperbolic space worked better than a vanilla a GAE model. Furthermore, instead of using simple linear layers for the encoder we use GAT \cite{velivckovic2017graph} layer of size $32$, which has access to the adjacency matrix. We use LeakyReLU for our encoder non-linearity while tanh is used for all flow models. Unlike GRevNets that use node features sampled from $\mathcal{N}(0,I)$ we find that it is necessary to provide the actual adjacency matrix otherwise training did not succeed. Our decoder defines edge probabilities as $p(A_{u,v}=1|z_u,z_v) = \sigma((-d_{\mathcal{G}}(u,v) - b)/\tau)$ where $b$ and $\tau$ are learned edge specific bias and temperature parameters implemented as one GAT layer followed by a linear layer both of size $32$. Thus both the encoder and decoder are both parameterized and optimized using the Adam optimizer \cite{kingma2014adam}. 

\section{Additional Density Estimation Results}
\label{additional_results}
We now provide additional qualitative results for density estimation in hyperbolic space as visualized in the Poincar\'e disk. For these visualizations we take a density initially defined on Euclidean space and project them to the hyperboloid using the logarithmic map at the origin. We then sample $500$ points from this new density and fit both $\mathcal{T}C$ and $\mathcal{W}\mathbb{H}C$ based flows. The results for the learned densities are shown below in Figure \ref{fig:additional_density_estimation}.

\begin{figure}[ht]
     \centering
     \includegraphics[width=0.95\linewidth]{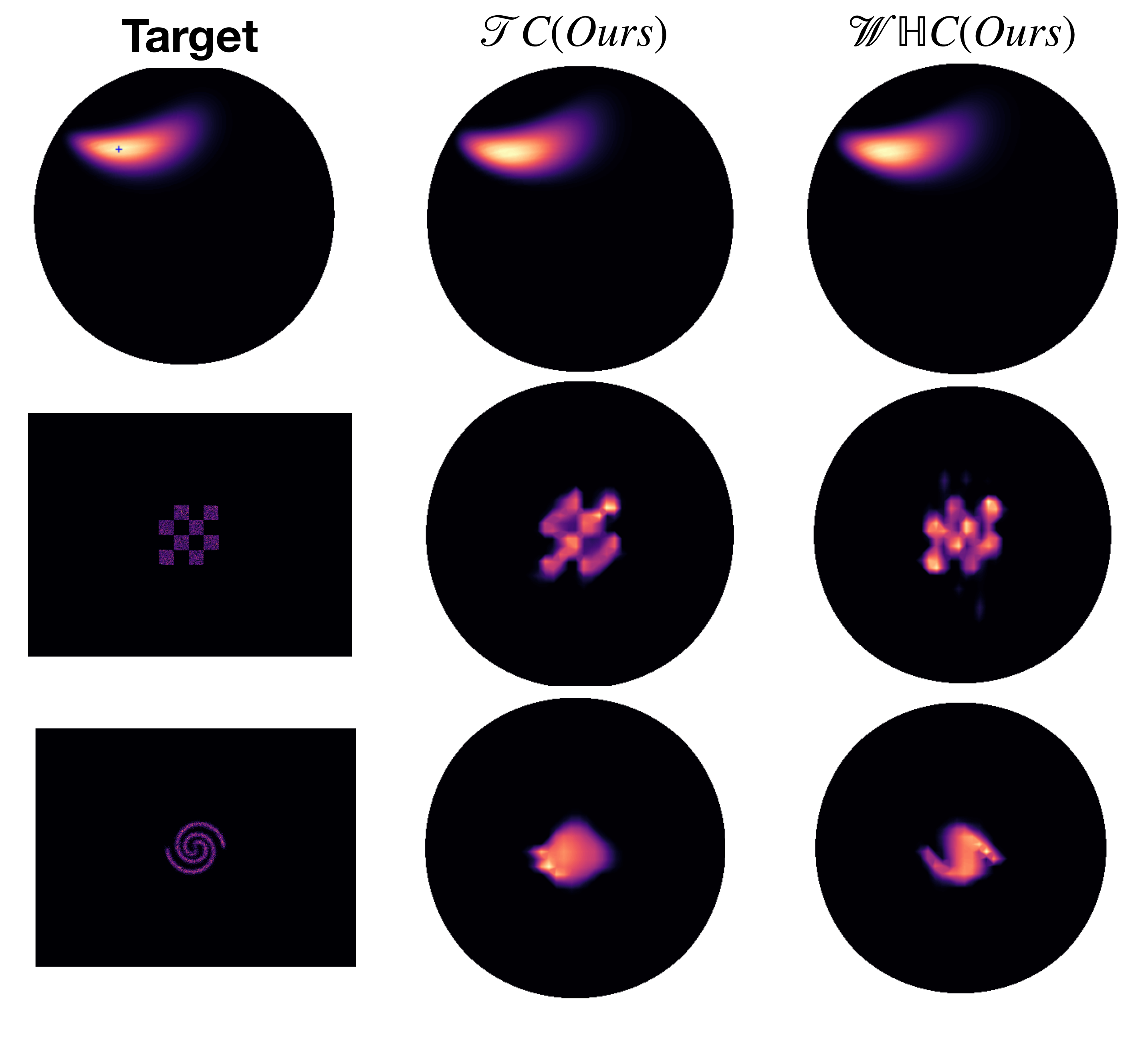}
     \caption{\textbf{Top:} Wrapped Gaussian with $\mu = [-1.0,1.0]$ and $\sigma = [1.0, 0.25]^T$. \textbf{Mid:} Checkerboard pattern projected to hyperbolic space. \textbf{Bot:} 2D Spiral projected to hyperbolic space}
     \label{fig:additional_density_estimation}
 \end{figure}

\section{Dataset Issues}
\label{dataset_issues}
Upon inspecting the code and data kindly provided by \citet{mathieu2019continuous} we uncovered some issues that led to us omitting their CS-PhD and Phylogenetics datasets in our comparisons. 
In particular, \citet{mathieu2019continuous} use a decoder in their cross-entropy loss that does not define a proper probability. 
This appears to have caused optimization issues that artificially deflated the reported performance of all the models investigated in that work.
When substituting in the dot product decoder employed in this work, the accuracy of all models increases dramatically. 
After this change, there is no longer any benefit from employing hyperbolic spaces on these datasets. 
In particular, after applying this fix, the performance of the hyperbolic VAE used by \citet{mathieu2019continuous} falls substantially below a Euclidean VAE.
Since we expect our hyperbolic flows to only give gains in cases where hyperbolic spaces provide a benefit over Euclidean spaces, these datasets do not provide a meaningful testbed for our proposed approach. 
Lastly, upon inspecting the code and data in \citet{mathieu2019continuous}, we also found that the columns 1 and 2 in Table 4 of their paper appear to be swapped compared to the results generated by their code. 
\end{document}